\newcommand{\ssp}{\vspace{0mm}}
\newcommand{\msp}{\vspace{0mm}}
\newcommand{\lsp}{\vspace{0mm}}
\newcommand{\ben}{\begin{enumerate*}}
\newcommand{\een}{\end{enumerate*}}
\newcommand{\beq}{\begin{eqnarray}}
\newcommand{\eeq}{\end{eqnarray}}
\newcommand{\bit}{\begin{itemize*}}
\newcommand{\eit}{\end{itemize*}}
\newcommand{\hide}[1]{}
\newtheorem{theorem}{Theorem} 
\newtheorem{definition}{Definition} 
\newcommand{\indep}[1]{\perp}
\DeclareMathOperator*{\argmax}{arg\,max}
\newcommand{\appropto}{\mathrel{\vcenter{
  \offinterlineskip\halign{\hfil$##$\cr
    \propto\cr\noalign{\kern2pt}\sim\cr\noalign{\kern-2pt}}}}}
\renewcommand{\eqref}[1]{Equation~(\ref{#1})}
\newtheorem{corollary}[theorem]{Corollary}
\newtheorem{prop}[theorem]{Proposition}
\newtheorem{lemma}[theorem]{Lemma}
\newtheorem{claim}[theorem]{Claim}
\newtheorem{assumption}{Assumption}
\newcommand{\truthfunc}{f}
\newcommand{\compfunc}{f_c}
\newcommand{\funcdiff}{\zeta}
\newcommand{\costdirect}{\lambda_l}
\newcommand{\costcomp}{\lambda_c}
\newcommand{\threscomp}{\gamma}
\newcommand{\kernel}{\kappa}
\newcommand{\truthfuncNormBound}{B}
\newcommand{\featurespace}{\mathcal{X}}
\newcommand{\feature}{x}
\newcommand{\directval}{y}
\newcommand{\dataset}{\mathcal{D}}
\newcommand{\infogain}[2]{\Phi_{#1}(#2)}
\newcommand{\logO}{\tilde{O}}
\newcommand{\shiftAlgoName}{COMP-GP-UCB\xspace}
\newcommand{\labmu}{\mu^{(l)}}
\newcommand{\labsig}{\sigma^{(l)}}
\newcommand{\labelerrbound}{\eta}
\algrenewcommand\algorithmicrequire{\textbf{Input:}}
\algrenewcommand\algorithmicensure{\textbf{Output:}}
\newcommand{\compprobfunc}{f_r}
\newcommand{\linkfunc}{\sigma}
\newcommand{\liplink}{L_2}
\newcommand{\liplinklower}{L_1}
\newcommand{\baseset}{\mathcal{H}}
\newcommand{\basec}{\overline{\mathcal{H}^{\threscomp}}}
\newcommand{\baset}{\baseset^{\threscomp}}
\newcommand{\nqueryr}{T^r}
\newcommand{\nqueryl}{T^l}
\newcommand{\baser}{\overline{\mathcal{H}^{\threscomp}}}
\newcommand{\nlabelq}{N_l}
\newcommand{\iterdiff}{k}
\newcommand{\changepoint}{\tilde{N}}
\newcommand{\budget}{\Lambda}
\newcommand{\diffupper}{\funcdiff_{\max}}
\newcommand{\startupcomp}{N_0}
\newcommand{\startupbudget}{\Lambda_0}
\newcommand{\equivdiff}{\bar{\funcdiff}}
\newcommand{\basetp}{\hat{H}^{\threscomp}}
\newcommand{\conf}{\phi}
\newcommand{\confr}{\phi^{(r)}}
\newcommand{\rmu}{\mu^{(r)}}
\newcommand{\rsigma}{\sigma^{(r)}}
\newcommand{\boundr}{\beta^{(r)}}
\newcommand{\boundlab}{\beta}
\newcommand{\querytype}{q}
\newcommand{\estbestfr}{\hat{f}_r}
\title{Zeroth Order Non-convex optimization with Dueling-Choice Bandits}
\author{
	Yichong Xu\thanks{Carnegie Mellon University, Pittsburgh, USA. Email: yichongx@cs.cmu.edu} \and
	Aparna Joshi\thanks{Carnegie Mellon University, Pittsburgh, USA. Email: aparnaj@cs.cmu.edu} \and
	Aarti Singh\thanks{Carnegie Mellon University, Pittsburgh, USA. Email: aarti@cs.cmu.edu} \and
	Artur Dubrawski\thanks{Carnegie Mellon University, Pittsburgh, USA. Email: awd@cs.cmu.edu} 
}
\date{}
\begin{document}

%

%

\maketitle

\begin{abstract}
We consider a novel setting of zeroth order non-convex optimization, where in addition to querying the function value at a given point, we can also duel two points and get the point with the larger function value. We refer to this setting as optimization with dueling-choice bandits since both direct queries and duels are available for optimization. We give the COMP-GP-UCB algorithm based on GP-UCB \citep{srinivas2009gaussian}, where instead of directly querying the point with the maximum Upper Confidence Bound (UCB), we perform a constrained optimization and use comparisons to filter out suboptimal points. COMP-GP-UCB comes with theoretical guarantee of $O(\frac{\Phi}{\sqrt{T}})$ on simple regret where $T$ is the number of direct queries and $\Phi$ is an improved information gain corresponding to a comparison based constraint set that restricts the search space for the optimum. In contrast, in the direct query only setting, $\Phi$ depends on the entire domain. Finally, we present experimental results to show the efficacy of our algorithm.
\end{abstract}

	\msp
\section{Introduction}
\ssp
Zeroth order non-convex optimization, also variously known as continuous multi-armed bandit or black-box optimization, is an important problem that naturally appears in various domains like dynamic pricing \citep{besbes2009dynamic}, reinforcement learning \citep{smart2000practical} and material science\citep{xue2016accelerated}. With an unknown black-box function $\truthfunc:\featurespace\rightarrow \mathbb{R}$, zeroth order optimization aims to find the optimal point of the function with as few queries to (a noisy version of) $\truthfunc(x)$ as possible, with no gradient information directly available. Although zeroth order {\em convex} optimization is generally efficient \citep{jamieson2012query}, optimizing a {\em non-convex} $\truthfunc$ under smoothness constraints requires the same effort as estimating $\truthfunc$ almost everywhere, and usually leads to a query complexity exponential in $d$, where $d$ is the feature space dimensionality\citep{carmon2017convex,chen1988lower,flaxman2005online,ge2015escaping,wang2018optimization}. The prohibitive cost for non-convex optimization has motivated research on suitable assumptions, such as linear bandits \citep{rusmevichientong2010linearly}, convex approximations \citep{wang2018optimization}, and optimization based on level sets \citep{malherbe2016ranking,wang2018optimization}. We propose a complementary approach, where in addition to direct queries to $\truthfunc$, one can also \emph{compare} two points in feature space, and obtain the point with a larger $\truthfunc$ value. Inspired by dueling bandits in the bandit domain \citep{yue2012k}, we call our setting non-convex optimization with dueling-choice bandits; we note that different than dueling bandits, here direct queries and comparisons are both available for optimizing $\truthfunc$, and therefore duels are available as an additional choice.

In many applications, comparisons can be available at a cheaper price than direct queries. For example in preference elicitation, the user can give scores on recommended items, as well as (more easily) compare two items to choose the preferred one. Similarly, for hyperparameter search of information retrieval (IR) models, direct queries typically involve collecting relevance scores from paid workers, whereas comparisons can be obtained by interleaving the ranking of two different models, and observing user click on the retrieval results \citep{radlinski2008does}. Such comparisons usually come with less cost in both time and money. As another example, in material synthesis, we aim to optimize the desired properties of materials by controlling the input parameters (temperature, pressure, etc.) \citep{faber2016machine,xue2016accelerated}. While material synthesis is expensive, comparisons can be carried out by asking material scientists. 

\textbf{Related Work.} There is a vast literature on zeroth order non-convex optimization \citep{Bull:2011:CRE:1953048.2078198,carmon2017convex,chen1988lower,flaxman2005online,ge2015escaping,hazan2017hyperparameter,wang2018optimization}. We build our work on GP-UCB\citep{srinivas2009gaussian}, a method for optimizing unknown functions under the Gaussian process (GP) assumption by optimizing the Upper Confidence Bound (UCB). Closest to our setting is a line of recent research on multi-fidelity GP optimization \citep{kandasamy2016multi,Kandasamy:2017:MBO:3305381.3305567,pmlr-v80-sen18a}, which assumes that we can query the target functions at multiple fidelities of different costs and precisions. We detail the relation and difference of our setting with multi-fidelity optimization in Section \ref{sec:with_mfgpucb}. To briefly describe it, our setting is harder since we cannot directly query the function on which comparisons are based. Moreover, the multi-fidelity assumptions such as fidelities being close in sup-norm do not hold for our setting since any constant shift of the comparison function yields the same comparisons. We instead consider an {\bf active transfer learning} setting where information from a function that can be learned using comparisons is transferred actively to optimize the target function (refer to Section \ref{sec:setup} for details). 

Optimization with comparisons has been studied under the framework of derivative-free optimization \citep{jamieson2012query,kumagai2017regret} and continuous dueling bandits \citep{ailon2014reducing,sui2017multi}. Kumagai \citep{kumagai2017regret} obtains optimal regret rates for a convex $\truthfunc$.
However to the best of our knowledge, no previous work has theoretical guarantees on optimizing a non-convex $\truthfunc$. Also, these results cannot be applied when the comparisons are biased (i.e., a Condorcet winner on comparisons might not be the best point for direct queries).

Finally, there is another line of research that combines direct queries and comparisons for classification or regression problems \citep{kane2017active,xu2017noise,xu2018nonparametric}. Our methods differ from theirs because we focus on the optimization setting, and only care about points near the optimum. These methods make direct queries across the whole feature space to learn the underlying function well, which is unnecessary for optimization.

\textbf{Our Contributions.} We develop and evaluate a new algorithm for non-convex optimization with dueling choices, which we refer to as Comparison-based GP-UCB (\shiftAlgoName). Our theoretical and experimental results show the strengths of our algorithm.

\begin{itemize}
	\item When we can obtain comparisons based on the target function $\truthfunc$, we show that comparisons can be as powerful as direct queries: \shiftAlgoName can achieve the \emph{same} rate of convergence as its label-only counterparts, while using only comparisons and no direct queries. This solves the open problem raised in \cite{sui2017multi}, to develop continuous dueling bandit algorithms with no-regret guarantees.
	\item Next, we assume that comparisons are based on a misspecified function $\compfunc$, where $\compfunc$ approximates $\truthfunc$. \shiftAlgoName in this case uses comparisons to optimize a function $f_r$ which has the same optimizer as $f_c$, and then use direct queries to search in a smaller region for the optimum of the target function. The regret rate of \shiftAlgoName is then better than the label-only counterparts, and it depends on the difference between $\compfunc$ and $\truthfunc$: the better the approximation, the lower the regret we can get from \shiftAlgoName. 
	We further demonstrate a version of the algorithm that adapts to this difference. Our algorithm also extends multi-fidelity GP optimization to the setting where information is transferred actively from a lower fidelity to a higher fidelity while only assuming that the optimizer of the lower fidelity (source function) 
	is within a constant distance of the optimizer of the higher fidelity (target function), instead of the fidelities being close everywhere. 
	\item In our experiments, we test \shiftAlgoName on multi-fidelity functions from previous literature and show that it outperforms label-only algorithms and existing multi-fidelity algorithms when comparisons are cheaper than direct labels.
\end{itemize}

%

\msp
\section{Background and Problem Setup\label{sec:setup}}
\ssp
	
	
	We aim to maximize a function $\truthfunc:\featurespace\rightarrow \mathbb{R}$, where $\featurespace \subseteq [0,r]^d$ is the feature space. In each iteration $t$ of optimization, we can query (expensive) direct queries to $\truthfunc$ at a chosen point $x_t$, and obtain $\directval=f(x_t)+\varepsilon$, $\varepsilon\in [-\labelerrbound,\labelerrbound]$ and $E[\varepsilon]=0$, with $\labelerrbound>0$ a known constant\footnote{Our methods can also be extended to the setting where $\varepsilon$ follows a sub-Gaussian distribution with parameter $\labelerrbound$. We assume a bounded $\varepsilon$ for simplicity here.}.
	
	\textbf{Comparison Probabilities.} In addition to traditional direct queries $\directval$, we can obtain (cheap) comparisons for a pair of points $(x_t,x_t')\in \featurespace \times \featurespace$. We assume that comparisons are based on a function $\compfunc$ which can be potentially different from $\truthfunc$ (as described later in this section). A common assumption in the literature is to use a link function to assume a distribution of the comparisons, i.e., we assume $\Pr[x\succ x']=\linkfunc(\compfunc(\feature)-\compfunc(\feature'))$ for some function $\linkfunc$. Common link functions include logistic function (BTL model\citep{bradley1952rank}), or Gaussian cdf (Thurstone model \citep{thurstone1927law}).

	\textbf{Connecting comparisons and direct queries.} 
	To make comparisons helpful for optimization, 
	we also require that $\compfunc$ is a good approximation of $\truthfunc$. Here we differentiate between two settings:
	\lsp
	\begin{itemize}
		\item Dueling-Choice Bandits with unbiased comparisons: We assume comparison comes from the same function as the target function, i.e., $\compfunc=\truthfunc$ or, more generally, that $\compfunc$ and $\truthfunc$ have the same optimizer ($\zeta = 0$ as described below). This may be the case when comparison and direct queries come from the same agent, such as the preference elicitation example in the introduction.
		
		\item Dueling-Choice Bandits with misspecified comparisons: We assume $\compfunc\approx \truthfunc$. In many cases, comparisons are from a different source (e.g. experts) than direct queries (e.g. experiments), and this can result in a biased $\compfunc$. To this end, we assume a bounded difference near the optimum:
		\newcommand{\opteffrange}{\iota}
		\begin{assumption}\label{asm:funcdiff}
			Let $\truthfunc^*=\max_x \truthfunc(x)$ and $\compfunc^*=\max_x \compfunc(x)$. There exists a constants $\funcdiff$ such that for any point $x\in \featurespace$
			 we have
			$|(\compfunc^*-\compfunc(x)) - (\truthfunc^*-\truthfunc(x))|\leq \funcdiff$. 
		\end{assumption}
		In words, when we get $\varepsilon$-close to the maximum of $\truthfunc$, we are at least $(\varepsilon+\funcdiff)$-close to the maximum of $\compfunc$, and vice versa. Under this assumption, we would require both comparison and direct queries if we want to achieve optimization error smaller than $\funcdiff$. 
		
		We note that our results can be generalized to the case where Assumption \ref{asm:funcdiff} only holds for $x\in\{x: \truthfunc^*-\truthfunc(x)\leq \tau \}$ for some fixed constant $\tau$.
	\end{itemize}
	\textbf{Smoothness Assumptions.} We assume that the target function $\truthfunc$ lies in a reproducing kernel Hilbert space (RKHS) $\mathcal{H}_\kernel$ induced by kernel $\kernel$, and that the RKHS norm of $\truthfunc$ is bounded: $\|\truthfunc\|_{\kernel}\leq B$ for a known constant $\truthfuncNormBound$.
	This assumption is also analyzed in \citep{chowdhury2017kernelized,srinivas2009gaussian} for traditional bandits. We note that every function $\truthfunc\in \mathcal{H}_\kernel$ has a finite kernel norm. When $\kernel$ is the linear kernel, $\|\truthfunc\|_{\kernel}\leq B$ induces that $\truthfunc$ is $\truthfuncNormBound$-Lipschitz.

	\newcommand{\upperquery}{\overline{n}_{\Lambda}}
	\newcommand{\lowerquery}{\underline{n}_{\Lambda}}
	\newcommand{\comptype}{\text{comp}}
	\newcommand{\labeltype}{\text{label}}
	\newcommand{\simpleregret}{S}
	
	\textbf{Budgets and Regrets. } We analyze the problem of optimizing $\truthfunc$ under a given cost budget $\budget$. Suppose a direct query costs $\costdirect$ units of some resource and a comparison costs $\costcomp<\costdirect$. 
	Also, let $\upperquery=\lceil \frac{\Lambda}{\costcomp}\rceil$ be the upper bound on number of queries when we use all the budget on comparisons, and $\lowerquery=\lfloor \frac{\Lambda}{\costdirect}\rfloor$ be the corresponding lower bound when we only use direct queries. 
	Also let $q_t=\labeltype$ if we make direct queries at iteration $t$, and $q_t=\comptype$ otherwise.
	We analyze the simple regret under budget $\budget$, defined as follows:
	\begin{align}
&	\simpleregret(\Lambda)=\min_tr_t\label{eqn:regret_def}\\
	&=\min_t\begin{cases}
	\displaystyle  \truthfunc^*-\truthfunc(x_t) & \text{ if } q_t=\labeltype,\\
	\min\{\truthfunc^*-\truthfunc(x_t),\truthfunc^*-\truthfunc(x'_t)\}, & \text{ if } q_t=\comptype.
	\end{cases}\nonumber
	\end{align}
	In words, we calculate the minimum regret achieved by either comparison or direct queries. We compute simple regret over all direct queries; for comparisons, we adopt the notion of weak regret employed in \citep{yue2012k}. Here we choose simple regret because our target is to optimize function $\truthfunc$, and cumulative regret is typically not relevant for our setting. Our method can also be easily extended to the optimizer error setting, where the algorithm gives an estimation of the optimum when it ends. In analyzing the regret rates, we use $O(\cdot)$ to ignore constants, and $\logO(\cdot)$ to ignore log terms in the regret bounds.



\msp
\section{Algorithm and Analysis}

\ssp
We describe our \shiftAlgoName algorithm in this section. We first present the Gaussian process framework on which our algorithm is based in Section \ref{sec:gp}. Then we present the algorithm in Section \ref{sec:algo_knowndiff}, under the assumption that $\funcdiff$ is known. This includes the unbiased comparison case, where $\funcdiff=0$. We present our theoretical analysis for \shiftAlgoName in Section \ref{sec:theory_analysis}. Finally, we give an extension to adapt to unknown $\funcdiff>0$ in Section \ref{sec:adapt_noise}.
\newcommand{\GP}{\mathcal{GP}}
\ssp
\subsection{The Gaussian Process Back End\label{sec:gp}}
\ssp
We base our methods on Gaussian Process, with kernel function $\kernel$. If $\truthfunc$ was sampled from the Gaussian process $\mathcal{GP}(0,\kernel)$, and the direct queries were coming from $\truthfunc$ plus a Gaussian noise, i.e., $\dataset=\{(\feature_i,\directval_i)\}_{i=1}^t$ with $\directval_i=f(x_i)+\varepsilon$, $\varepsilon\sim \mathcal{N}(0,\labelerrbound^2)$, then
the posterior distribution at $\truthfunc(\feature)|\dataset$ would be a Gaussian $\mathcal{N}(\mu_t(\feature),\sigma_t(\feature))$ with
\begin{align}
\mu_t(\feature)&=k^T(K+\labelerrbound^2I_t)^{-1}Y,\label{eqn:bayes_update}\\ \sigma_t(\feature)&=\kernel(\feature,\feature)-k^T(K+\labelerrbound^2I_t)^{-1}k. \nonumber
\end{align}
Here $Y=(\directval_1,...,\directval_t)^T$, $k=(\kernel(\feature, \feature_1),...,\kernel(\feature, \feature_t))^T$, and matrix $K\in \mathbb{R}^{t\times t}$ is given by $K_{ij}=\kernel(\feature_i, \feature_j)$, and $I_t$ is the $t\times t$ identity matrix.

\textbf{Remark.} We note that the Gaussian noise and prior is only assumed to derive updates to the mean and variance in the algorithm, and we do not assume the actual feedbacks follow a Gaussian model, nor the functions are sampled from the Gaussian process. 
We only assume that $\truthfunc$ have bounded norm in $\mathcal{H}_\kernel$ and that $\varepsilon$ is bounded in $[-\labelerrbound,\labelerrbound]$, as stated in Section \ref{sec:setup}.
This is the same as the \emph{agnostic} setting in GP-UCB \citep{chowdhury2017kernelized,srinivas2009gaussian}.

\textbf{The Maximum Information Gain.} As in previous works on GP \citep{chowdhury2017kernelized, kandasamy2016multi}, our results will depend on the \emph{maximum information gain} \citep{srinivas2009gaussian} between function measurements and the function values, defined as below:
\begin{definition}
	Suppose $A\subseteq \featurespace$ is a subset of feature space, and $\tilde{A}=\{\feature_1,...,\feature_n \}\subseteq A$ is a finite subset of $A$. Then the maximum information gain on $A$ with $n$ evaluations is defined as
	\(\infogain{n}{A}=\max_{\tilde{A}\subseteq A, |\tilde{A}|=n}I(\truthfunc_{\tilde{A}}+\varepsilon_{\tilde{A}};\truthfunc_{\tilde{A}}), \)
	where $\truthfunc_{\tilde{A}}=[\truthfunc(\feature)]_{\feature\in \tilde{A}}$, $\varepsilon_{\tilde{A}}\sim \mathcal{N}(0,\eta^2 I)$, and $I(X,Y)=H(X)-H(X|Y)$ is the mutual information.
\end{definition}

When $\featurespace\subseteq \mathbb{R}^d$ is compact and convex, \cite{srinivas2009gaussian} shows that i) for linear kernel $\kernel$, $\infogain{n}{\featurespace}=O(d\log n)$; ii) for squared exponential (SE, or RBF) kernel, $\infogain{n}{\featurespace}=O((\log n)^{d+1})$; iii) For Mat\'{e}rn kernels $\kernel(\feature, \feature')=\frac{2^{1-\nu}}{\Gamma(\nu)}(\frac{\sqrt{2\nu}z}{\rho})^\nu B_{\nu}(\frac{\sqrt{2\nu}z}{\rho})$, we have $\infogain{n}{\featurespace}=O\left(n^{\frac{d(d+1)}{2\nu+d(d+1)}}\log n\right)$.

\textbf{Review of GP-UCB and IGP-UCB.}	
Previous sequential optimization has adopted the upper confidence bound (UCB) principle, where we maintain a high-confidence upper bound $\conf:\featurespace\rightarrow \mathbb{R}$ for all $x\in \featurespace$, such that $\truthfunc(x)\leq \conf(x)$ with high probability. Our algorithm builds on UCB algorithms for GP, namely GP-UCB \citep{srinivas2009gaussian} and IGP-UCB \citep{chowdhury2017kernelized} (the latter is an improvement of the former).
In time step $t$ of optimization, IGP-UCB queries the point that maximizes the confidence upper bound in the form $\labmu_{t-1}(x)+\boundlab_t\labsig_{t-1}(x)$, where $\labmu_{t-1},(\labsig_{t-1})^2$ are the posterior mean and variance function of the GP from step $t-1$, and $\boundlab_t$ is a multiplier that increases with $t$. We describe these algorithms in detail in Appendix.

\subsection{The Borda Function $\compprobfunc$}
A straightforward way to incorporate comparisons into optimization is to use them to compute a GP posterior of either $\truthfunc$ or $\compfunc$. However, we will face several difficulties. Firstly, the posterior based on comparisons cannot be analytically computed. Also, we cannot compute the joint posterior based on both direct queries and comparisons, since $\truthfunc$ and $\compfunc$ can be different. Lastly, comparisons might not be truthful and can be inconsistent; i.e., human might give contradicting comparisons like $x_1\succ x_2\succ x_3\succ x_1$ \citep{NIPS2015_6023}.

We instead consider a different function directly related to $\compfunc$, defined as $\compprobfunc(x)=\Pr[x\succ X]$, where $X$ is randomly chosen from $\featurespace$. In words, $\compprobfunc(x)$ is the probability that $x$ beats a random point $X\in \featurespace$. We refer to $\compprobfunc$ as the Borda function, inspired by Borda scores in the dueling bandits literature \citep{heckel2016active,NIPS2015_6023}. An advantage of using $\compprobfunc$ is that we can obtain unbiased estimates of $\compprobfunc(x)$ by comparing $x$ to a random point in $X\in \featurespace$. 

It is easy to see that $\compprobfunc$ should have the same optimizer as $\compfunc$. 
We make the following assumption to ensure usefulness of comparisons:
\begin{assumption}\label{asm:bound_diff}
	Let $\compprobfunc^*=\max_x \compprobfunc(x)$ and $\compfunc^*=\max_x \compfunc(x)$. There exists constants $\liplinklower, \liplink$ such that for every $x\in \featurespace$ we have
	\(\frac{1}{\liplinklower}(\compfunc^*-\compfunc(x))\leq \compprobfunc^*-\compprobfunc(x)\leq \liplink(\compfunc^*-\compfunc(x)). \)
\end{assumption}
In other words, difference in $\compfunc$ will cause a difference of similar scale in $\compprobfunc$. This requires that the comparisons induces a Borda function $\compprobfunc$ such that $\compprobfunc$ is close to $\compfunc$ at its optimum, and that $\compprobfunc$ and $\compfunc$ has the same optimizer. 
We note that this is a quite weak assumption, as we do not restrict the result of comparing individual points $\feature,\feature'$ to comply with $\compfunc(x)-\compfunc(x')$, i.e., comparisons do not need to be consistent.
We can show that Assumption \ref{asm:bound_diff} holds under the link function setting, when $\linkfunc$ is Lipschitz continuous:
\begin{prop}\label{prop:liplink}
	Suppose comparisons follows a link function $\linkfunc$ with a Lipschitz constant between $[1/\liplinklower,\liplink]$, i.e., $\frac{|\linkfunc(x)-\linkfunc(y)|}{|x-y|}\in [\frac{1}{\liplinklower},\liplink]$, $\forall x,y\in \mathbb{R}$, then Assumption \ref{asm:bound_diff} holds.
\end{prop}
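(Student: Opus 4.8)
The plan is to unfold the definition of the Borda function $\compprobfunc$ under the link-function model, rewriting it as an expectation of $\linkfunc$ applied to differences of $\compfunc$-values, and then to push the two-sided Lipschitz bound on $\linkfunc$ through the expectation pointwise. Concretely, I would first condition on the random comparison point $X$: under the link assumption $\Pr[x\succ X]=\linkfunc(\compfunc(x)-\compfunc(X))$, so taking the expectation over $X$ gives
$$\compprobfunc(x)=\expectprob{X}{\linkfunc(\compfunc(x)-\compfunc(X))}.$$

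Next I would fix a maximizer $x^*$ of $\compfunc$, so $\compfunc(x^*)=\compfunc^*$ (such an $x^*$ exists by the definition $\compfunc^*=\max_x\compfunc(x)$). Since $\linkfunc$ is a link function it is strictly increasing — strict monotonicity follows from the lower Lipschitz ratio $1/\liplinklower>0$, which forces $\linkfunc$ to be injective and hence strictly monotone. Therefore, for every fixed $X$ the integrand $\linkfunc(\compfunc(x)-\compfunc(X))$ is maximized over $x$ at $x=x^*$, and so is the expectation; this shows $\compprobfunc^*=\compprobfunc(x^*)$, i.e.\ $\compprobfunc$ and $\compfunc$ share the optimizer $x^*$, and justifies substituting $\compfunc^*$ for $\compfunc(x^*)$ below.

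With this in hand I would write the gap as
$$\compprobfunc^*-\compprobfunc(x)=\expectprob{X}{\linkfunc(\compfunc^*-\compfunc(X))-\linkfunc(\compfunc(x)-\compfunc(X))}.$$
For each fixed $X$ the two arguments of $\linkfunc$ differ by exactly $\compfunc^*-\compfunc(x)\geq 0$, a quantity independent of $X$. Applying the hypothesis $\frac{1}{\liplinklower}|a-b|\leq|\linkfunc(a)-\linkfunc(b)|\leq \liplink|a-b|$ with $a=\compfunc^*-\compfunc(X)$ and $b=\compfunc(x)-\compfunc(X)$, and using $a\geq b$ so that all the relevant differences are nonnegative, the integrand is sandwiched between $\frac{1}{\liplinklower}(\compfunc^*-\compfunc(x))$ and $\liplink(\compfunc^*-\compfunc(x))$. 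Because these bounds do not depend on $X$, taking the expectation preserves them and yields exactly the two-sided inequality of Assumption~\ref{asm:bound_diff}.

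This is essentially a one-line pointwise Lipschitz estimate carried through an expectation, so I do not expect a serious obstacle. The only steps requiring care are verifying that $\compprobfunc$ attains its maximum at $x^*$ (so that the substitution $\compprobfunc^*=\compprobfunc(x^*)$ is legitimate), which rests on the strict monotonicity of $\linkfunc$ guaranteed by the positive lower Lipschitz constant, and keeping track of the sign $\compfunc^*-\compfunc(x)\geq 0$ so that the absolute values in the Lipschitz hypothesis can be dropped. Both are routine, and no additional assumptions beyond those already stated are needed.
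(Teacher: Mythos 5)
Your proposal is correct and follows essentially the same route as the paper's proof: express $\compprobfunc(x)$ as $\mathbb{E}_X[\linkfunc(\compfunc(x)-\compfunc(X))]$, combine the two expectations into one, and push the two-sided Lipschitz bound on $\linkfunc$ through the expectation pointwise. Your added care in justifying that $\compfunc$'s maximizer also maximizes $\compprobfunc$ (via strict monotonicity of $\linkfunc$ from the positive lower Lipschitz constant) fills in a step the paper states without proof, but the argument is otherwise identical.
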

We comment that common link functions such as BTL \citep{bradley1952rank} and Thurstone \citep{thurstone1927law} all have bounded Lipschitz functions if $\compfunc$ is bounded.

Lastly, we note that \cite{ailon2014reducing} also compare $x$ to a random point $X$, and use the feedback to update the function estimates. However, their method relies on a linear link function $\sigma(x)=\frac{1+x}{2}$ and cannot be applied for BTL or Thurstone models.

\begin{algorithm}[htb]
	\caption{\shiftAlgoName}
	\begin{algorithmic}[1]
		\Require Comparison bias $\funcdiff$, comparison exploration threshold $\threscomp$, confidence $\delta$
		\State Set $D_0^r=D_0^l=\emptyset$, $(\rmu_0,\rsigma_0)=(\labmu_0,\labsig_0)=(0,\kernel^{1/2}),t\leftarrow 0$
		\Repeat\label{step:firstphase}
		\State Compute $x_t=\argmax_{\feature\in \featurespace} \rmu_{t-1}(\feature)+\boundr_{t}\rsigma_{t-1}(\feature)$\label{step:firstphasext}
		\State \textsc{Query}($x_t,\comptype$)
		\Until{$\boundr_{t}\rsigma_{t-1}(\feature_t)\leq \threscomp$ or budget exhausted}\label{step:firstphaseends}
		\State Let $\estbestfr=\rmu_{t-1}(\feature_t)-\boundr_{t}\rsigma_{t-1}(\feature_t)$ \label{step:lcb}
		\While{Budget not exhausted}\label{step:secondphase}
		\State Let $\confr_t(x)=\rmu_{t-1}(x)+\boundr_t\rsigma_{t-1}(x)-\estbestfr+\liplink\funcdiff$
		\State Compute $x_t=\argmax_{x\in \featurespace: \confr_t(x)\geq 0} \labmu_{t-1}(x)+\boundlab_t\labsig_{t-1}(x)$
		\If{$\boundr_t(x_t)\rsigma_{t-1}(x_t)\geq \gamma$} \label{step:checkconf_p2} \textsc{Query}$(x_t,\text{comp})$
		\Else \; \textsc{Query}$(x_t,\text{label})$
		\EndIf
		\State $t\leftarrow t+1$
		\EndWhile\label{step:secondphaseends}
		\Statex\hrulefill
		\Procedure{Query}{query point $x_t$, query type $\querytype_t$}
		\If{$\querytype_t=\text{comp}$}
		\State Sample $\feature'$ randomly from $\featurespace$ and query to compare $(\feature_t,\feature')$, obtain $z_t$
		\State Update $D_t^c\leftarrow D_{t-1}^c\cup \{(x_t,z_t)\}$, $D_t^l\rightarrow D_{t-1}^l$
		\State Perform Bayesian update for $\rmu_{t}, \rsigma_t$ based on $D_t^c$ with $y_t=z_t$ following (\ref{eqn:bayes_update})
		\Else
		\State Query direct labels for $\feature_t$ and obtain $\directval_t$
		\State Update $D_t^l\leftarrow D_{t-1}^l\cup \{(x_t,y_t)\}$, $D_t^c\rightarrow D_{t-1}^c$
		\State Perform Bayesian update for $\labsig_{t}, \labsig_t$ based on $D_t^l$ following (\ref{eqn:bayes_update})
		\EndIf        
		\EndProcedure
	\end{algorithmic}
	\label{algo:comp-gp-ucb-twophase}
\end{algorithm}

\subsection{Optimization with Known $\funcdiff$ \label{sec:algo_knowndiff}}

When $\funcdiff$ is known and given, \shiftAlgoName is formally described in Algorithm \ref{algo:comp-gp-ucb-twophase}. Our algorithm works both for unbiased comparisons ($\funcdiff=0$) and misspecified comparisons ($\funcdiff>0$). \shiftAlgoName is an anytime algorithm, meaning that it does not need to know the total budget $\budget$ before it begins. 
For any input $\funcdiff\geq 0$, the high-level idea is to constrain the search region for $\truthfunc$ using comparisons to the set $\baseset:=\{x: \compprobfunc(x) \geq \compprobfunc^* -  L_2\zeta\}$ where $\compprobfunc^* = \max_x \compprobfunc(x)$. $\baseset$ is guaranteed to contain the optimizer $\truthfunc$ under our assumptions; To see this, let $x^*$ be any optimizer of $\truthfunc$, and we have
\(\compprobfunc^*-\compprobfunc(x^*)\leq \liplink (\compfunc^*-\compfunc(x^*)) \leq \liplink(\truthfunc^*-\truthfunc(x^*)+\funcdiff)=\liplink\funcdiff. \)
The first inequality follows from Assumption \ref{asm:bound_diff} and the second one follows from Assumption \ref{asm:funcdiff}. It is easy to see that $\baseset$ is much smaller than $\featurespace$ if comparisons are mostly correct (i.e., $\funcdiff$ is small); therefore we can explore more efficiently by restricting the search on $\baseset$.


\shiftAlgoName takes as input $\funcdiff$, a parameter $\threscomp$ to control exploration on comparisons, and a confidence level $\delta$. We keep track of posteriors $(\labmu,\labsig), (\rmu, \rsigma)$ for $\truthfunc$ and $\compprobfunc$ respectively, and construct confidence intervals $\labmu_{t-1}(x)\pm \boundlab_t\labsig_{t-1}(x)$, $\rmu_{t-1}(x)\pm \boundr_t\rsigma_{t-1}(x)$.
Since $\compprobfunc$ is unknown, to approximate $\baseset$, the algorithm adopts a two-phase approach: In the first phase (Step \ref{step:firstphase}-\ref{step:firstphaseends}), we optimize $\compprobfunc$ using comparison queries until $\boundr_{t}\rsigma_{t-1}(\feature_t)\leq \threscomp$, i.e., the queried point has confidence of at least $\threscomp$. At the end of the first phase, we compute $\estbestfr$ as a lower bound for $\compprobfunc^*$. 
Next, we start the second phase exploring $\truthfunc$ (Step \ref{step:secondphase}-\ref{step:secondphaseends}). We select the query point $x_t$ based on a filtering $\confr_t(x)\geq 0$;
the filtering approximates the constraint set $\baseset$ by combining the current UCB of $\compprobfunc$ and the LCB $\estbestfr$
from the first phase. 
Then the algorithm optimizes the UCB of $\truthfunc$ under the constraint of $\confr_t(x)\geq 0$. While doing this, we check the UCB of $\compprobfunc$ at the maximizer $x_t$ and if we are not confident about $\compprobfunc(x_t)$, we query a comparison, or otherwise we make a direct query as in GP-UCB. 

The query process is described in the procedure \textsc{Query}. For direct queries, we query $x_t$ directly, and update the posterior of $\truthfunc$ according to (\ref{eqn:bayes_update}); for comparisons, we compare $x_t$ with a random point $x'$, and use the result as feedback to update posterior of $\compprobfunc$. We note that this comparison result is an unbiased estimate of $\compprobfunc(x_t)$.

\textbf{The Two-Phase Approach.} Both phases are critical for the algorithm to succeed. The first phase is important in two ways: Firstly, it helps to get a low regret in the unbiased comparisons setting, and in the initial stages of the algorithm when only comparison queries are used for the biased (misspecified comparison) setting. 
Also, it gives a lower bound $\estbestfr \leq \compprobfunc^*$ of the optimum of $\compprobfunc$ at Step \ref{step:lcb} which will be used to approximate the constraint set $\baseset$. Then we use the second phase to obtain low regret in the biased comparison case.

\textbf{Choice of $\confr_t$.} The choice of $\confr_t$ is critical for the algorithm to succeed. We want that the region $S=\{x: \confr_t(x)\geq 0\}$ is not too small or too large: we need that every maximizer $\feature^*$ of $\truthfunc$ is in $S$, while also excluding as many points as possible using the information from $\compprobfunc$. To achieve the former, we have added $\liplink\funcdiff$ to the confidence interval to account for the difference in $\compfunc$ and $\truthfunc$. To achieve the latter, we need both a good UCB of $\compprobfunc$ and a good LCB of $\compprobfunc^*=\max \compprobfunc(x)$. The good UCB is ensured by the check at Step \ref{step:checkconf_p2}; we only make direct queries when we are confident enough about $\compprobfunc(x_t)$. The good LCB is ensured by the first phase, where we compute $\estbestfr$; without the first phase $\estbestfr$ can be arbitrarily bad and it will lead to suboptimal direct queries. In the proof we show that when $\confr_t(x)\geq 0$ and $\boundr_{t}(x)\rsigma_{t-1}(x)\geq \gamma$, $x$ belongs to an approximation of $\baseset$. So the two constraints combined ensure that we use direct queries to explore $\baseset$.


\ssp
\subsection{Theoretical Analysis\label{sec:theory_analysis}}
\ssp


We now present our theoretical results. We defer full proofs to the appendix due to space constraints. We first analyze the unbiased comparison case. In this case, we have $\funcdiff=0$, and we only need comparisons to achieve low regret. Therefore we run \shiftAlgoName with $\funcdiff=\threscomp=0$; in this case, the algorithm only executes the first phase, and only uses comparisons to optimize $\compprobfunc$. We obtain the following guarantee.
\begin{theorem}\label{thm:twophase_unbiased}
	Suppose Assumption \ref{asm:bound_diff} holds, and $\compfunc=\truthfunc$.
	Let $\boundr_t = 2B+\sqrt{2\left(\infogain{t-1}{\featurespace}+1+\log(1/\delta)\right)}$.
	There exists a constant $C$ dependent on $d,\kernel$ such that \shiftAlgoName with $\funcdiff=\threscomp=0$ has a simple regret bounded by
	\begin{equation}
	\simpleregret(\budget)\leq 
	C\left(B+\sqrt{\left(\infogain{\upperquery}{\featurespace}+\log(1/\delta)\right)}\right)\sqrt{\frac{\infogain{\upperquery}{\featurespace}}{\upperquery}}. \label{eqn:bound_unbiased}
	\end{equation}
\end{theorem}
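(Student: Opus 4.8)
The plan is to observe that with $\funcdiff=\threscomp=0$ the algorithm never leaves the first phase, so it spends the entire budget on comparisons and reduces \emph{exactly} to running IGP-UCB on the Borda function $\compprobfunc$ over the whole domain $\featurespace$. Concretely, each comparison round queries $x_t$ against a uniformly random $X\in\featurespace$ and returns $z_t\in\{0,1\}$ with $\E[z_t\mid x_t]=\Pr[x_t\succ X]=\compprobfunc(x_t)$; hence $z_t=\compprobfunc(x_t)+\nu_t$ where $\nu_t$ is a zero-mean martingale difference sequence bounded in $[-1,1]$, and therefore sub-Gaussian. Since the Bayesian update in \textsc{Query} uses $y_t=z_t$, this is precisely the agnostic observation model of IGP-UCB, with the unknown function being $\compprobfunc$ rather than $\truthfunc$, and the number of rounds being $n=\upperquery=\lceil\budget/\costcomp\rceil$.

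First I would establish a high-probability uniform confidence band for $\compprobfunc$: with probability at least $1-\delta$, $|\compprobfunc(x)-\rmu_{t-1}(x)|\leq \boundr_t\rsigma_{t-1}(x)$ for all $t$ and all $x\in\featurespace$. This is the self-normalized concentration bound of \citet{chowdhury2017kernelized}, which requires the sub-Gaussian noise just noted together with a bound on the RKHS norm of the unknown function; the choice $\boundr_t=2B+\sqrt{2(\infogain{t-1}{\featurespace}+1+\log(1/\delta))}$ corresponds to $\|\compprobfunc\|_\kernel\leq 2B$. Proving this norm bound on the Borda function, given only $\|\truthfunc\|_\kernel\leq B$ and the Lipschitz link structure that yields $\compprobfunc(x)=\E_X[\linkfunc(\truthfunc(x)-\truthfunc(X))]$ in the unbiased case, is the step I expect to be the main obstacle, since it must control the RKHS norm of an average of link-composed differences of $\truthfunc$.

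Granting the confidence band, the regret argument is the standard GP-UCB telescoping carried out on $\compprobfunc$. Since $x_t$ maximizes $\rmu_{t-1}(\cdot)+\boundr_t\rsigma_{t-1}(\cdot)$ over all of $\featurespace$, for any maximizer $x^*_r$ of $\compprobfunc$ we have $\compprobfunc^*-\compprobfunc(x_t)\leq 2\boundr_t\rsigma_{t-1}(x_t)$. Summing over the $n=\upperquery$ rounds, using $\boundr_t\leq \boundr_n$, and applying Cauchy--Schwarz gives $\sum_{t\leq n}(\compprobfunc^*-\compprobfunc(x_t))\leq 2\boundr_n\sqrt{n\sum_{t\leq n}(\rsigma_{t-1}(x_t))^2}$, and the usual inequality $\sum_{t\leq n}(\rsigma_{t-1}(x_t))^2=O(\infogain{n}{\featurespace})$ bounds the posterior-variance sum by the maximum information gain over $\featurespace$ (the search being unconstrained, $\featurespace$ is the relevant set). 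Since $\boundr_n=O\bigl(B+\sqrt{\infogain{n}{\featurespace}+\log(1/\delta)}\bigr)$, dividing by $n$ to pass from cumulative to simple (minimum) regret yields a bound of the stated form for the Borda regret $\min_t(\compprobfunc^*-\compprobfunc(x_t))$.

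Finally I would translate the Borda regret back to regret in $\truthfunc$. Because $\compfunc=\truthfunc$, Assumption~\ref{asm:bound_diff} gives $\truthfunc^*-\truthfunc(x_t)=\compfunc^*-\compfunc(x_t)\leq \liplinklower(\compprobfunc^*-\compprobfunc(x_t))$ for every $t$, so (monotonicity of the minimum) the simple regret in $\truthfunc$ is at most $\liplinklower$ times the Borda simple regret; moreover the definition of $\simpleregret$ for comparison rounds takes the minimum of the regrets of $x_t$ and its random partner $X$, which only helps. Absorbing $\liplinklower$ and the universal constants from the information-gain inequality and the sub-Gaussian parameter into the single constant $C$ depending on $d,\kernel$ then produces exactly \eqref{eqn:bound_unbiased} with $\upperquery$ in place of $n$.
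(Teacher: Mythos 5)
Your proposal is correct and follows essentially the same route as the paper: run the first phase only, treat each comparison against a random partner as a bounded-noise unbiased observation of the Borda function $\compprobfunc$, invoke the IGP-UCB confidence band of \citet{chowdhury2017kernelized}, telescope with Cauchy--Schwarz and the information-gain bound on $\sum_t \rsigma_{t-1}^2(x_t)$, and translate back to regret in $\truthfunc$ via the $\liplinklower$ factor from Assumption~\ref{asm:bound_diff}. The one step you flag as the main obstacle --- justifying $\|\compprobfunc\|_\kernel \leq B$ so that the stated $\boundr_t$ is valid for $\compprobfunc$ --- is in fact left implicit in the paper as well (its confidence lemma is stated with $\|\compprobfunc\|_\kernel$ but the theorem substitutes $B$), so you have correctly identified the only point where both arguments rely on an unstated assumption rather than a proof.
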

\newcommand{\lowerquerytmp}{\lowerquery}
\textbf{Remark.} IGP-UCB \citep{chowdhury2017kernelized} in the label-only setting has regret of form
\begin{align}
&\simpleregret_{\text{IGP-UCB}}(\budget)\leq \label{eqn:regret_igp_ucb} \\
&C\left(B+\sqrt{\left(\infogain{\lowerquerytmp}{\featurespace}+\log(1/\delta)\right)}\right)\sqrt{\frac{\infogain{\lowerquerytmp}{\featurespace}}{\lowerquerytmp}},\nonumber
\end{align}
where $\lowerquerytmp=\lfloor \frac{\budget}{\costdirect}\rfloor$. This is the same form as (\ref{eqn:bound_unbiased}), but with $\upperquery$ replaced with $\lowerquerytmp$. Recall that $\upperquery$ is the number of queries when we use all the budget on comparisons, and $\lowerquery$ is the number for using all budget on direct queries. 
In other words, \shiftAlgoName \emph{has the same rate as IGP-UCB as if direct queries are as cheap as comparisons}. When comparisons are much cheaper than direct queries, \shiftAlgoName leads to a great advantage by significantly reducing the number of direct queries needed.

We then analyze \shiftAlgoName in the misspecified comparison setting($\funcdiff>0$). In this setting, comparisons act as a filter on $\featurespace$ to reduce the search region for direct queries. When $\compfunc$ approximates $\truthfunc$ well (i.e., a small $\funcdiff$), the set $\baseset=\{x:\compprobfunc(x)\geq \compprobfunc^*-\liplink\funcdiff \}$ is much smaller than the feature space $\featurespace$.
Therefore by using comparisons, we wish to replace the $\infogain{n_0}{\featurespace}$ term in (\ref{eqn:regret_igp_ucb}) by $\infogain{n_0}{\baseset}$, effectively exploring a smaller region. 
We show that \shiftAlgoName can have a similar behavior by exploring on a slightly larger set
dependent on $\threscomp$, defined as 
$\baset=\{\feature\in \featurespace: \compprobfunc(\feature)\geq \compprobfunc^*-\liplink\funcdiff-4\threscomp \}$. The following theorem characterizes the regret of \shiftAlgoName under the misspecified comparison setting.

\begin{theorem}\label{thm:twophase}
	Suppose Assumptions \ref{asm:bound_diff} and \ref{asm:funcdiff} hold, and $\funcdiff$ is known.
	Let $\boundr_t$ be the same as in Theorem \ref{thm:twophase_unbiased}, and   $\boundlab_t=2B+\sqrt{2\left(\infogain{t-1}{\baset}+1+\log(1/\delta)\right)}$.
	There exists constants $\startupbudget, C$ dependent on $\funcdiff,\threscomp,B,d, \kernel$ such that if when $\budget\geq \startupbudget$
	we have $\simpleregret(\budget)\leq \min\{S_1, S_2\}$, where
	\begin{align*}
		S_1=&\;2\liplinklower\threscomp+\funcdiff+\nonumber\\
		&\;C\left(B+\sqrt{\left(\infogain{\upperquery}{\featurespace}+\log(1/\delta)\right)}\right)\sqrt{\frac{\infogain{\upperquery}{\featurespace}}{\upperquery}},\\
	S_2=&\;C\left(B+\sqrt{\left(\infogain{\lowerquery}{\baset}+\log(1/\delta)\right)}\right)\sqrt{\frac{\infogain{\lowerquery}{\baset}}{\lowerquery}}.
	\end{align*}
\end{theorem}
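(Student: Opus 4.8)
The plan is to establish the two bounds $\simpleregret(\budget)\le S_1$ and $\simpleregret(\budget)\le S_2$ separately, after which the theorem follows by taking their minimum. The backbone is the agnostic GP confidence result of \cite{chowdhury2017kernelized}: for the stated $\boundr_t,\boundlab_t$, with probability at least $1-\delta$ the bounds $|\compprobfunc(x)-\rmu_{t-1}(x)|\le \boundr_t\rsigma_{t-1}(x)$ and $|\truthfunc(x)-\labmu_{t-1}(x)|\le \boundlab_t\labsig_{t-1}(x)$ hold simultaneously for all $x\in\featurespace$ and all $t$. The comparison outcome $z_t$ returned by \textsc{Query} is an unbiased, bounded observation of $\compprobfunc(x_t)$, so the $\compprobfunc$-posterior is a genuine GP posterior and its width may use $\infogain{\cdot}{\featurespace}$; the $\truthfunc$-width, however, should use $\infogain{\cdot}{\baset}$, which is justified only after confining the direct queries to $\baset$. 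I would condition on this joint confidence event throughout.

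\textbf{Two structural facts.} First, \emph{the optimizer is never filtered out}. For any maximizer $x^*$ of $\truthfunc$, Assumptions \ref{asm:funcdiff} and \ref{asm:bound_diff} give $\compprobfunc^*-\compprobfunc(x^*)\le \liplink(\compfunc^*-\compfunc(x^*))\le\liplink\funcdiff$; since $\estbestfr\le\compprobfunc^*$ on the confidence event, $\confr_t(x^*)=\rmu_{t-1}(x^*)+\boundr_t\rsigma_{t-1}(x^*)-\estbestfr+\liplink\funcdiff\ge \compprobfunc(x^*)-\compprobfunc^*+\liplink\funcdiff\ge 0$, so $x^*$ stays feasible at every step. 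Second, \emph{direct queries land in $\baset$}. A direct query at $x_t$ requires both $\confr_t(x_t)\ge 0$ and $\boundr_t\rsigma_{t-1}(x_t)<\threscomp$; writing $\compprobfunc(x_t)\ge[\rmu_{t-1}(x_t)+\boundr_t\rsigma_{t-1}(x_t)]-2\boundr_t\rsigma_{t-1}(x_t)$, lower bounding the bracket by $\estbestfr-\liplink\funcdiff$ through $\confr_t(x_t)\ge0$, and using the phase-one guarantee $\estbestfr\ge\compprobfunc^*-2\threscomp$ (the last phase-one point maximizes the UCB and obeys the stopping rule $\boundr_t\rsigma_{t-1}\le\threscomp$) yields $\compprobfunc(x_t)\ge\compprobfunc^*-\liplink\funcdiff-4\threscomp$, i.e.\ $x_t\in\baset$. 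This legitimizes $\infogain{\cdot}{\baset}$ in $\boundlab_t$, and there is no circularity because the localization uses only the globally valid $\compprobfunc$-confidence.

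\textbf{Bound $S_1$ (comparison side).} A standard GP-UCB step inequality gives $\compprobfunc^*-\compprobfunc(x_t)\le 2\boundr_t\rsigma_{t-1}(x_t)$ at every comparison step, which Assumptions \ref{asm:funcdiff}--\ref{asm:bound_diff} convert into $\truthfunc^*-\truthfunc(x_t)\le 2\liplinklower\boundr_t\rsigma_{t-1}(x_t)+\funcdiff$. I would then split on how phase one terminates. If it stops by the confidence rule, its last point already satisfies $\boundr_t\rsigma_{t-1}\le\threscomp$, so its regret is at most $2\liplinklower\threscomp+\funcdiff$. If it stops by exhausting the budget, all $\upperquery$ queries are phase-one comparisons optimized over $\featurespace$, and the information-gain sum bound $\sum_t(\boundr_t\rsigma_{t-1}(x_t))^2\le C'\boundr_{\upperquery}^2\,\infogain{\upperquery}{\featurespace}$ combined with a pigeonhole step over the best comparison produces the third term of $S_1$. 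Since $S_1$ is the sum of both contributions, it upper bounds the best comparison's regret in either case, giving $\simpleregret\le S_1$.

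\textbf{Bound $S_2$ (direct side) and the main obstacle.} Feasibility of $x^*$ yields $\labmu_{t-1}(x_t)+\boundlab_t\labsig_{t-1}(x_t)\ge\truthfunc^*$ at each direct step, hence $\truthfunc^*-\truthfunc(x_t)\le 2\boundlab_t\labsig_{t-1}(x_t)$; because all such $x_t$ lie in $\baset$, the sum bound $\sum(\boundlab_t\labsig_{t-1}(x_t))^2\le C'\boundlab_{N_l}^2\,\infogain{N_l}{\baset}$ and a pigeonhole step give $\min_t(\truthfunc^*-\truthfunc(x_t))\le C\boundlab_{N_l}\sqrt{\infogain{N_l}{\baset}/N_l}$, with $N_l$ the number of direct queries. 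The remaining work is budget accounting: I would use $\budget\ge\startupbudget$ to ensure phase one terminates by the confidence rule after only $O(1)$ comparisons (in $\budget$) and that the phase-two comparisons, which occur only at points with $\boundr_t\rsigma_{t-1}\ge\threscomp$, are likewise bounded in number by the information-gain argument; then the budget left for direct queries gives $N_l=\Theta(\lowerquery)$ and the constant is absorbed into $C$. \textbf{The hard part} is exactly this coupling of localization and accounting: one must verify that confining the direct queries to the random, data-dependent set $\baset$ is compatible with the self-normalized concentration inequality, so that the $\truthfunc$-width legitimately contracts from $\infogain{\cdot}{\featurespace}$ to $\infogain{\cdot}{\baset}$, and one must bound the interleaved phase-two comparisons tightly enough (via $\threscomp$ and the definition of $\baset$) to retain $\Theta(\lowerquery)$ labels.
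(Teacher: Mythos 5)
Your proposal is correct and follows essentially the same route as the paper's proof: the IGP-UCB confidence bands, the claim that $\confr_t(x^*)\ge 0$ so the optimizer is never filtered, the fact that every direct query lands in $\baset$ (hence $\nqueryl_n(\basec)=0$), the case split on how phase one terminates for $S_1$, and the budget accounting showing comparisons consume at most half the budget for $\budget\ge\startupbudget$ so that $\Theta(\lowerquery)$ direct queries remain. The only cosmetic difference is that the paper bounds the total number of comparison queries via a covering-number argument (Lemma 13 of Kandasamy et al.) giving $C_\kernel(\boundr_n/\threscomp)^{p+2}$, which is sublinear in $\budget$ rather than $O(1)$, but this does not change the argument.
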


We discuss about the bounds and setup of parameters before coming to the proof of Theorem \ref{thm:twophase}.

\textbf{Remarks.} The regret bound in Theorem \ref{thm:twophase} enjoys best of both worlds from comparisons and direct queries. The first bound has the same form as in Theorem \ref{thm:twophase_unbiased} but with another $2\liplinklower\threscomp+\funcdiff$ term. This comes from the first phase of \shiftAlgoName, and the extra term comes from the fact that $\compfunc\ne \truthfunc$. In the second phase, the algorithm achieves the second bound $S_2$, which is the rate of using $\lowerquery$ direct queries to explore $\baset$. Compared with (\ref{eqn:regret_igp_ucb}), the second bound has the same rate on $\lowerquery$, but with a reduced search region $\baset$ and a startup budget $\startupbudget$ for comparisons to work.
When $\compfunc$ is a good approximation to $\truthfunc$, $\baset$ is much smaller than $\featurespace$ and will lead to a great improvement in the number of direct queries needed.
\\
 
\textbf{Setup of parameters.} 1. Setting $\threscomp$: $\threscomp$ acts as a threshold for exploring comparisons in both phases of \shiftAlgoName. A small $\threscomp$ will lead to a small $\baset$ and therefore better regret rates; but it will also make the algorithm spend more time on comparisons before moving to direct queries, i.e., a large $\startupbudget$. One plausible choice for $\threscomp$ is to set $\threscomp=\frac{1}{\liplink}\funcdiff$, and this will make $\baset\approx\baseset$.\\
2. Setting $\boundlab_{t}$: The setup for $\boundlab_{t}$ in Theorem \ref{thm:twophase} requires knowing $\infogain{t}{\baset}$ before algorithm starts and this is unrealistic to set up. However, in practice the default choice for $\boundlab_{t}$ is often very loose and hand-tuned values are used instead (e.g., Kandasamy et. al\citep{kandasamy2016multi} uses $\boundlab_{t}=0.2d\log(2t)$). In this sense this setup for $\boundlab_{t}$ does not affect its practical use.
For theoretical purposes, we can also set $\boundlab_t=\boundr_t$; this leads to a regret rate of $\logO\left(\frac{(B+\sqrt{\infogain{\lowerquery}{\featurespace}})\sqrt{\infogain{\lowerquery}{\baset}}}{\sqrt{\lowerquery}}\right)$, slightly larger than the current rate but still smaller than GP-UCB.

\begin{proof}[Proof Sketch]
	We prove Theorem \ref{thm:twophase} and Theorem \ref{thm:twophase_unbiased} follows as a corollary. For the first bound, if we have left phase 1 and entered phase 2, let $T_0$ be the time that we leave phase 1. By routine calculation we can show
	\begin{align}
	\simpleregret(\budget)\leq \truthfunc^*-\truthfunc(x_{T_0})\leq L_1(\compprobfunc^*-\compprobfunc(x_{T_0}))\leq 2L_1\threscomp+\funcdiff. \label{eqn:regret_phase1_exit}
	\end{align}
	On the other hand, if we do not finish phase 1 (e.g., when $\funcdiff=\threscomp=0$), we can follow the proof of IGP-UCB \citep{chowdhury2017kernelized} and show that 
	\begin{align}
	\simpleregret(\budget)\leq C\boundr_{\upperquery}\sqrt{\frac{\infogain{\upperquery}{\baser}}{\upperquery}}+\funcdiff.\label{eqn:regret_phase1_stay}
	\end{align}
	Combining (\ref{eqn:regret_phase1_exit}) and (\ref{eqn:regret_phase1_stay}) we get the first bound $S_1$. 
	
	Now we show the second bound $S_2$. Suppose the algorithm makes $n$ queries. For any set $A\subseteq \featurespace$, let $\nqueryr_n(A)$ be the number of comparison queries into $A$ when the algorithm has made $n$ queries, and $\nqueryl_n(A)$ be the number of direct queries. We have
	\[n=\nqueryr_n(\featurespace)+\nqueryl_n(\basec)+\nqueryl_n(\baset). \]
	For the first term, we show that there exists a constant $C_\kernel$ such that $\nqueryr_n(\featurespace)\leq C_{\kernel}\left(\frac{\boundr_{n}}{\threscomp}\right)^{p+2}$, where $p=d$ for SE kernel and $p=2d$ for Mat\'{e}rn kernel. For the second term, we show that our algorithm makes sure that $\nqueryl_n(\basec)=0$, i.e., it always query in $\baser$ when it uses direct queries. These two results combined can show that we allocate at least $\lowerquery/2$ direct queries to explore $\baset$. The second bound $S_2$ then follows by bounding the regret similar to IGP-UCB.
\end{proof}

\begin{algorithm}[ht!]
	\caption{\shiftAlgoName for unknown $\funcdiff$}
	\begin{algorithmic}[1]
		\Require Threshold $\threscomp$, comparison bias starting point $\funcdiff_0$, bias upper bound $\diffupper$, budget $\budget$
		\State Set $D_0^r=D_0^l=\emptyset$, $(\rmu_0,\rsigma_0)=(\labmu_0,\labsig_0)=(0,\kernel^{1/2}),t\leftarrow 0, \iterdiff\leftarrow 0, \nlabelq\leftarrow 0$
		\Repeat
		\State Compute $x_t=\argmax_{\feature\in \featurespace} \rmu_{t-1}(\feature)+\boundr_{t}\rsigma_{t-1}(\feature)$
		\State \textsc{Query}($x_t,\comptype$)
		\Until{$\boundr_{t}\rsigma_{t-1}(\feature_t)\leq \threscomp$}
		\State Let $\estbestfr=\rmu_{t-1}(\feature_t)-\boundr_{t}\rsigma_{t-1}(\feature_t)$
		\While{$\funcdiff_{\iterdiff}\leq \diffupper$}
		\State Let $\confr_t(x)=\rmu_{t-1}(x)+\boundr_t\rsigma_{t-1}(x)-\estbestfr+2\liplink\funcdiff_\iterdiff$
		\State Compute $x_t=\argmax_{x\in \featurespace: \confr_t(x)\geq 0} \labmu_{t-1}(x)+\boundlab_t\labsig_{t-1}(x)$
		\If{$\boundr_t(x_t)\rsigma_{t-1}(x_t)\geq \gamma$} \textsc{Query}$(x_t,\text{comp})$ 
		\Else 
		\State \textsc{Query}$(x_t,\text{label})$
		\State $\nlabelq \leftarrow \nlabelq + 1$
		\EndIf
		\If{$\nlabelq\geq \frac{\lowerquery}{2\lceil \log(\diffupper/\funcdiff_0)\rceil}$\label{step:doublediff} }
		\State $\nlabelq\leftarrow 0, \funcdiff_{\iterdiff+1}\leftarrow 2\funcdiff_{\iterdiff}, \iterdiff\leftarrow \iterdiff+1$
		\EndIf
		\State $t\leftarrow t+1$
		\EndWhile
	\end{algorithmic}
	\label{algo:adapt-bias}
\end{algorithm}

\ssp
\subsection{\shiftAlgoName with Unknown $\funcdiff$ \label{sec:adapt_noise}}
\ssp

\newcommand{\initdiff}{\funcdiff_{0}}
In practice, we cannot know $\funcdiff$ in general, and it is even hard to verify whether $\|\compfunc-\truthfunc\|_\infty \leq \funcdiff$ holds for a given $\funcdiff$. On the other hand, we can often know an upper bound $\diffupper$ such that $\funcdiff\leq \diffupper$. For example, if we know both $\truthfunc$ and $\compfunc$ are bounded in $[-B_{\infty}, B_{\infty}]$ 
we naturally have $\|\compfunc-\truthfunc\|_\infty\leq 2B_{\infty}$. However, Algorithm \ref{algo:comp-gp-ucb-twophase} is not useful if we set $\funcdiff=2B_\infty$, because that will lead to a constraint set $\baseset=\{x:\compprobfunc(x)\geq \compprobfunc^*-2\liplink B_\infty \}$ 
that can be as large as $\featurespace$ and we have to explore the whole feature space with direct queries. We develop a slightly different method in Algorithm \ref{algo:adapt-bias} that tries to search $\funcdiff$ between an initial value $\initdiff$ and the upper bound $\diffupper$, and adapts to the true $\funcdiff$. 

Algorithm \ref{algo:adapt-bias} works in the finite-horizon scenario, where the budget $\budget$ is given as input. The process of Algorithm \ref{algo:adapt-bias} is mostly similar to Algorithm \ref{algo:comp-gp-ucb-twophase}, except that it uses $\funcdiff_{\iterdiff}$ in the second phase in place of $\funcdiff$. 
We optimize the function as if Assumption \ref{asm:funcdiff} holds for $\funcdiff_{\iterdiff}$. $\funcdiff_{\iterdiff}$ starts from $\funcdiff_0$; at step \ref{step:doublediff}, once we have spent enough queries at the current estimate of $\funcdiff$, we double the current $\funcdiff_{\iterdiff}$. We iterate until we reach $\funcdiff_{\iterdiff}>\diffupper$.
The threshold for $\nlabelq$, $\frac{\lowerquery}{2\lceil \log(\diffupper/\funcdiff_0)\rceil}$, is chosen such that 
we divide a label budget of $\lowerquery/2$ direct queries equally among the $\lceil \log(\diffupper/\funcdiff_0)\rceil$ possible values of the $\funcdiff_{\iterdiff}$'s. The constant 2 is chosen arbitrarily here; any choice of $\lowerquery/c$ for a constant $c>1$ will obtain the same rate.

We present our theoretical results as a corollary to Theorem \ref{thm:twophase}. Since we cannot find the exact $\funcdiff$, our results depend on a slightly larger $\equivdiff=\max\{2\funcdiff, \funcdiff_0\}$. We use $\basetp=\{\feature\in \featurespace: \compprobfunc(\feature)\geq \compprobfunc^*-2\liplink \equivdiff-4\threscomp \}$ to represent the constraint set of interest when $\funcdiff$ is replaced by $\equivdiff$.

\begin{corollary}\label{col:unknown_diff}
	Suppose Assumptions \ref{asm:bound_diff} and \ref{asm:funcdiff} hold, and $\funcdiff\leq \diffupper$.
	Under the same setting of $\boundr_t,C,\budget_0$ as in Theorem \ref{thm:twophase}, and $\boundlab_t=2B+\sqrt{2\left(\infogain{t-1}{\basetp}+1+\log(1/\delta)\right)}$,
	the simple regret of Algorithm \ref{algo:adapt-bias} satisfies $\simpleregret(\budget)\leq \min\{S_1,S_2\}$, where
	\begin{align*} 
	S_1=&\;2\liplinklower\threscomp+2\funcdiff+\\
	&\;C\left(B+\sqrt{\left(\infogain{\upperquery}{\featurespace}+\log(1/\delta)\right)}\right)\sqrt{\frac{\infogain{\upperquery}{\featurespace}}{\upperquery}},\\
	S_2=&\;C\left(B+\sqrt{\left(\infogain{\lowerquery}{\basetp}+\log(1/\delta)\right)}\right)\sqrt{\frac{\infogain{\lowerquery}{\basetp}}{\lowerquery}}.
	\end{align*}
\end{corollary}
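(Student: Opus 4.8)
The plan is to deduce Corollary \ref{col:unknown_diff} from Theorem \ref{thm:twophase} via a doubling argument over the unknown bias $\funcdiff$. Phase 1 of Algorithm \ref{algo:adapt-bias} is byte-for-byte identical to that of Algorithm \ref{algo:comp-gp-ucb-twophase} (it uses only comparisons to optimize $\compprobfunc$ until $\boundr_t\rsigma_{t-1}(x_t)\leq \threscomp$), so the bound $S_1$ carries over from the phase-1 analysis in Theorem \ref{thm:twophase} essentially unchanged: if the algorithm exits phase 1 at time $T_0$, chaining Assumptions \ref{asm:bound_diff} and \ref{asm:funcdiff} gives $\simpleregret(\budget)\leq \truthfunc^*-\truthfunc(x_{T_0})\leq \liplinklower(\compprobfunc^*-\compprobfunc(x_{T_0}))+\funcdiff\leq 2\liplinklower\threscomp+\funcdiff\leq 2\liplinklower\threscomp+2\funcdiff$, while if it never exits the comparison-only IGP-UCB bound on $\featurespace$ applies; together these yield $S_1$. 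The real content is the second bound $S_2$.

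The first step toward $S_2$ is to locate a ``good'' block. Since $\funcdiff_{\iterdiff}$ starts at $\funcdiff_0$ and doubles at Step \ref{step:doublediff}, let $\iterdiff^*$ be the smallest index with $\funcdiff_{\iterdiff^*}\geq \funcdiff$. If $\funcdiff_0\geq \funcdiff$ then $\iterdiff^*=0$ and $\funcdiff_{\iterdiff^*}=\funcdiff_0$; otherwise $\funcdiff_{\iterdiff^*}\in[\funcdiff,2\funcdiff)$. In either case $\funcdiff_{\iterdiff^*}\leq \equivdiff=\max\{2\funcdiff,\funcdiff_0\}$, and because $\funcdiff\leq \diffupper$ the loop does reach this block.

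The second step is to show that within the iterations where $\funcdiff_{\iterdiff}=\funcdiff_{\iterdiff^*}$ the algorithm behaves exactly like Algorithm \ref{algo:comp-gp-ucb-twophase} run with a \emph{valid} bias parameter. Since $\funcdiff_{\iterdiff^*}\geq \funcdiff$, Assumption \ref{asm:funcdiff} holds with $\funcdiff_{\iterdiff^*}$ in place of $\funcdiff$, so the computation of Section \ref{sec:algo_knowndiff} shows that any optimizer $x^*$ of $\truthfunc$ satisfies $\compprobfunc^*-\compprobfunc(x^*)\leq \liplink\funcdiff\leq 2\liplink\funcdiff_{\iterdiff^*}$; combined with $\estbestfr\leq \compprobfunc^*$ this certifies $\confr_t(x^*)\geq 0$, i.e. $x^*$ survives the filter (this is precisely why the filter in Algorithm \ref{algo:adapt-bias} carries the inflated offset $2\liplink\funcdiff_{\iterdiff}$). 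Reusing the argument behind the $S_2$ bound of Theorem \ref{thm:twophase}, I would then argue that every direct query in this block lands in $\{x:\compprobfunc(x)\geq \compprobfunc^*-2\liplink\funcdiff_{\iterdiff^*}-4\threscomp\}\subseteq \basetp$, the last inclusion using $\funcdiff_{\iterdiff^*}\leq \equivdiff$.

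Finally I would count and convert. The threshold at Step \ref{step:doublediff} guarantees at least $m:=\frac{\lowerquery}{2\lceil\log(\diffupper/\funcdiff_0)\rceil}$ direct queries in the good block, all inside $\basetp$ (the hypothesis $\budget\geq \startupbudget$ ensures the budget suffices to reach this block and run the required comparisons). Applying the IGP-UCB regret bound restricted to $\basetp$ to these $m$ queries gives regret of order $\bigl(B+\sqrt{\infogain{m}{\basetp}+\log(1/\delta)}\bigr)\sqrt{\infogain{m}{\basetp}/m}$; since $\infogain{m}{\basetp}\leq \infogain{\lowerquery}{\basetp}$ and $1/m=2\lceil\log(\diffupper/\funcdiff_0)\rceil/\lowerquery$, the factor $\sqrt{2\lceil\log(\diffupper/\funcdiff_0)\rceil}$ is absorbed into the constant $C$, producing $S_2$ with $\lowerquery$ in the denominator; taking the minimum with $S_1$ finishes the argument. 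The main obstacle is precisely this last conversion: the guaranteed queries at the good bias number only $\lowerquery/\log$, not $\lowerquery$, so the delicate points are (i) verifying that the factor-of-two slack in the filter is exactly enough to retain $x^*$ and keep queries in $\basetp$ once $\funcdiff_{\iterdiff}$ overshoots $\funcdiff$ by up to a factor of two, and (ii) justifying that the $\lceil\log(\diffupper/\funcdiff_0)\rceil$ loss from splitting the label budget across candidate biases is a fixed constant that can be folded into $C$ without degrading the dependence on $\lowerquery$.
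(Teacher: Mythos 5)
Your proposal is correct and follows essentially the same route as the paper's own proof: $S_1$ carries over because phase~1 is unchanged, and for $S_2$ the doubling schedule guarantees a block with $\funcdiff_{\iterdiff}\in[\funcdiff,\equivdiff]$ receiving at least $\lowerquery/(2\lceil\log(\diffupper/\funcdiff_0)\rceil)$ direct queries, all landing in $\basetp$, to which the Theorem~\ref{thm:twophase} argument is applied with the logarithmic loss absorbed via sublinear growth of $\boundlab_n$ and $\infogain{n}{\basetp}$. The two ``delicate points'' you flag are exactly the steps the paper also relies on (and treats rather tersely), so no further changes are needed.
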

\textbf{Remark.} 1. The regret rate of Corollary \ref{col:unknown_diff} is almost the same as Theorem \ref{thm:twophase}, except the set $\baset$ is replaced with $\basetp$. We note that all the terms in the regret rate depend only on $\equivdiff$ or $\funcdiff$, and do not depend on $\diffupper$.  This means Algorithm \ref{algo:adapt-bias} can adapt to unknown level of comparison bias $\funcdiff$. \\
2. Similar to Theorem \ref{thm:twophase}, Corollary \ref{col:unknown_diff} also requires the unknown quantity $\infogain{t}{\basetp}$ to set $\boundlab_t$; in practice we can also use a similar hyper-parameter search to find this quantity. $\threscomp$ also takes a similar effect as in Algorithm \ref{algo:comp-gp-ucb-twophase}, and $\threscomp=\frac{1}{\liplink}\initdiff$ can lead to $\hat\baset\approx \baseset$ and a practical algorithm. Again, setup of these parameters only depends on $\equivdiff$ and is not affected by $\diffupper$.\\
\begin{figure*}
	\msp
	\centering
	\begin{subfigure}[b]{0.33\textwidth}
		\centering
		\includegraphics[width=\textwidth]{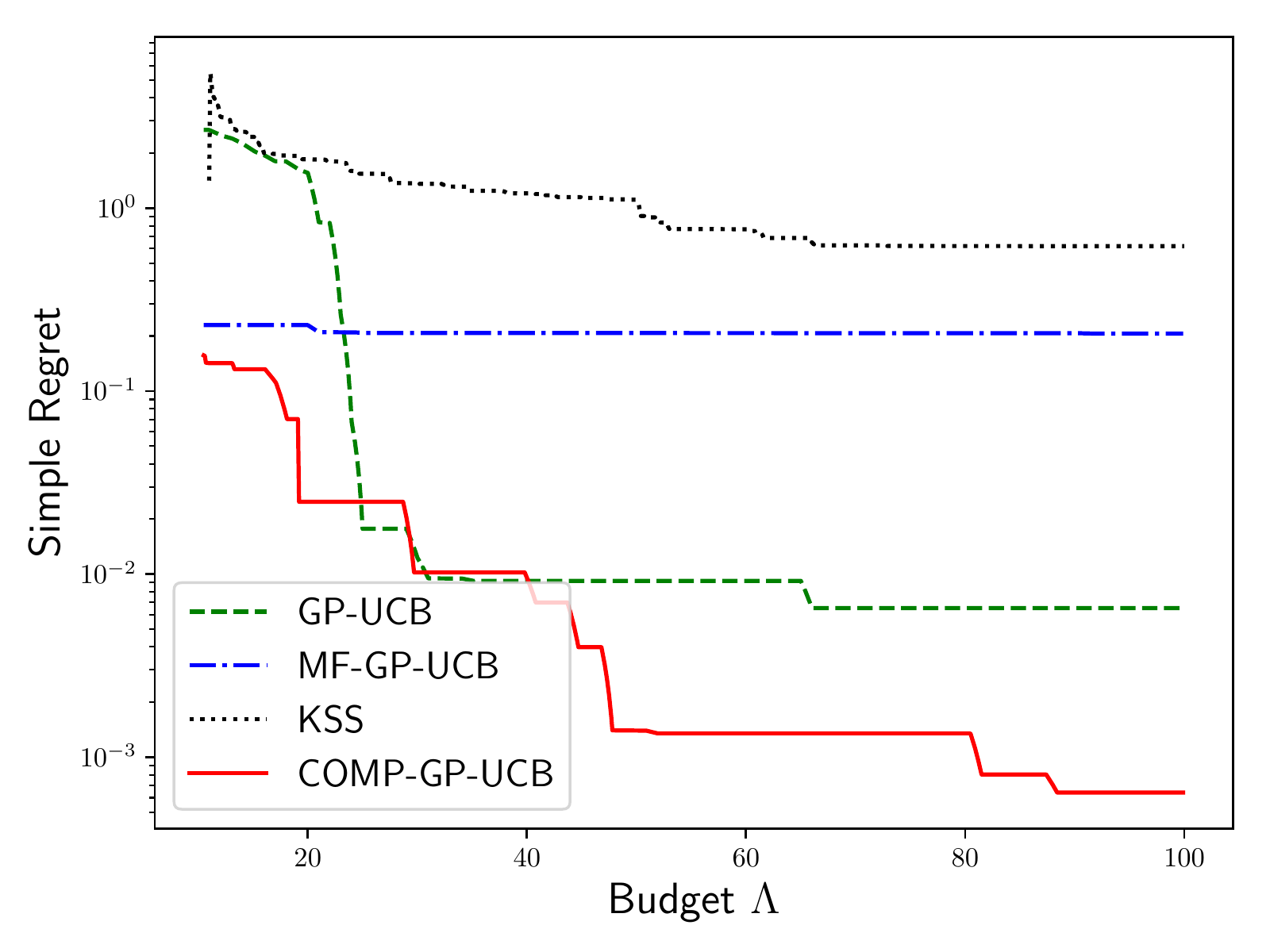}
		\caption{CurrinExp, varying budget \label{fig:currin_budget}}
	\end{subfigure}%
	\hfill
	\begin{subfigure}[b]{0.33\textwidth}
		\centering
		\includegraphics[width=\textwidth]{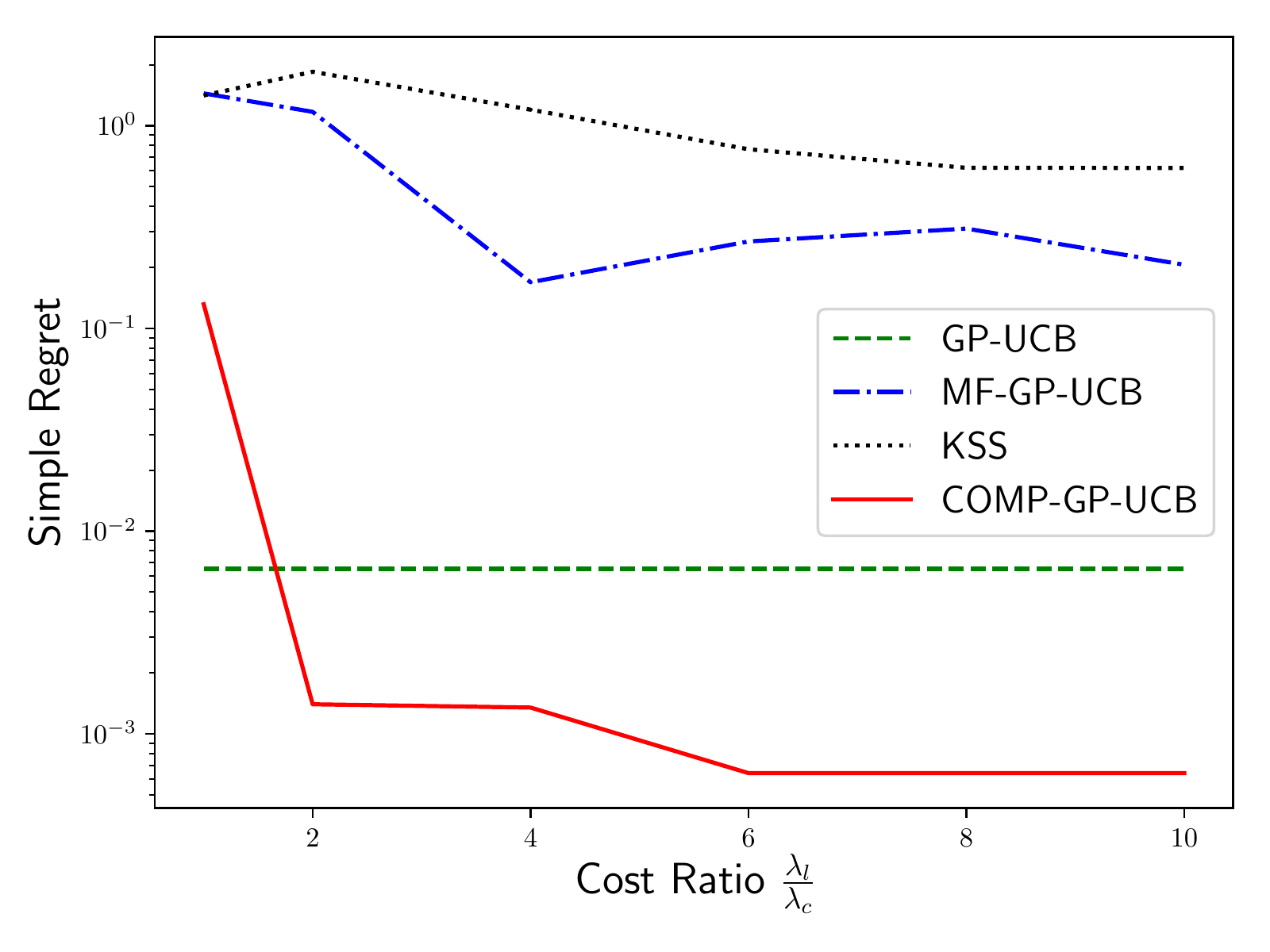}
		\caption{CurrinExp, varying cost ratio\label{fig:currin_ratio}}
	\end{subfigure}%
	\hfill
	\begin{subfigure}[b]{0.33\textwidth}
		\centering
		\includegraphics[width=\textwidth]{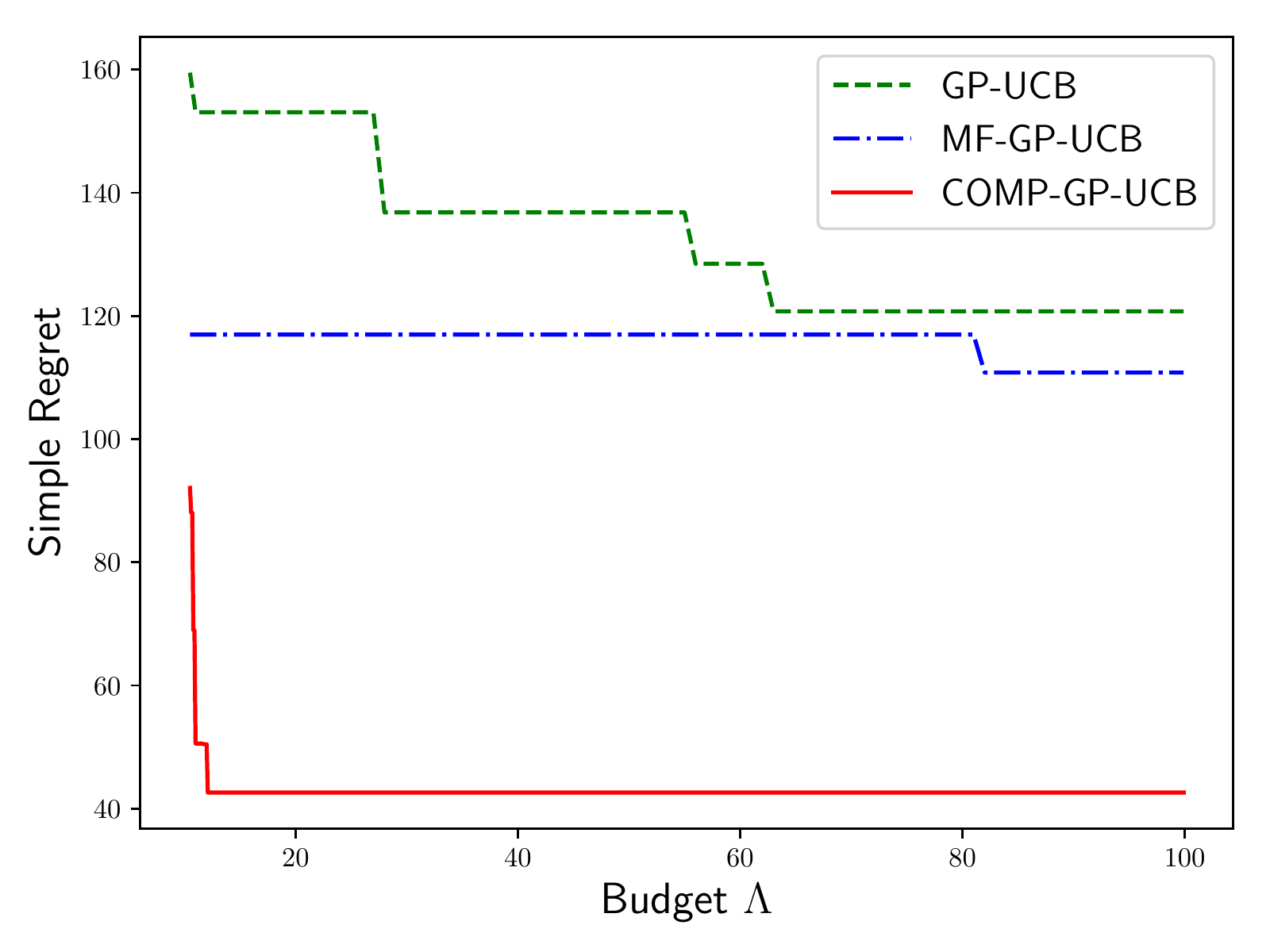}
		\caption{BoreHole, varying budget \label{fig:borehole_budget}}
	\end{subfigure}
	
	\caption{Empirical results comparing \shiftAlgoName with baseline methods. KSS stands for KernelSelfSparring. \label{fig:expr_result}}
	\lsp\msp
\end{figure*}

\msp
\subsection{Comparison with MF-GP-UCB \citep{kandasamy2016multi} \label{sec:with_mfgpucb}}
\ssp

Our setting and method share some common characteristics as the multi-fidelity method MF-GP-UCB \citep{kandasamy2016multi}, and we formally discuss them here. Our setup is similar to MF-GP-UCB in the two-fidelity case, where the algorithm has access to the target function $\truthfunc$ and its approximation $f^{(1)}$, with $\|\truthfunc-f^{(1)}\|_\infty\leq \funcdiff$ for some known $\funcdiff>0$. Although we also assume $\compfunc$ is a good approximation for $\truthfunc$ (in a weaker sense of being close in terms of $\truthfunc^*$ and $\compfunc^*$, see Assumption \ref{asm:funcdiff}), our setting is harder than MF-GP-UCB and their algorithm cannot be directly applied in our case. This is because we cannot directly query $\compfunc$: $\compfunc$ is only available through comparisons, and we will get the same set of comparisons from $\compfunc$ and $\compfunc+c$ for any constant $c$. In our case, we can only get unbiased estimates for $\compprobfunc$. However, it is unlikely that $\|\compprobfunc-\truthfunc\|_\infty$ is small, because $\compprobfunc(x)\in [0,1]$ for all $x$ since it is the probability of beating a random point, whereas $\truthfunc$ can have arbitrary values. 

MF-GP-UCB bears some resemblance to the second phase in our Algorithm \ref{algo:comp-gp-ucb-twophase}, but they are principally different in choosing the next query point $x_t$. In the MF-GP-UCB algorithm, we have access to another function $\truthfunc'$ similar to $\truthfunc$. The algorithm constructs two sets of UCBs $\phi(x), \phi'(x)$ for $\truthfunc$ and $\truthfunc'$ separately, and use $\min\{\phi(x),\phi'(x) \}$ as a final UCB. In our case, UCBs of $\compprobfunc$ and $\truthfunc$ are not comparable. Instead we use a novel constrained optimization approach based on observations in the first phase.

Another difference is that MF-GP-UCB needs the function difference $\funcdiff$ known beforehand, whereas our modified \shiftAlgoName (Algorithm \ref{algo:adapt-bias}) can adapt to an unknown $\funcdiff$. We note that MF-GP-UCB does use a doubling mechanism in their experiments to make it practical, but they do not provide any theoretical guarantees.

\ssp

\msp
\section{Experiments}
\ssp
We perform experiments against plausible baselines to verify our theory and illustrate the efficacy of our algorithm. 

\textbf{Baselines.} We evaluate the performance of COMP-GP-UCB against the following baselines:\\
\textsf{GP-UCB}\citep{srinivas2009gaussian}: The label-only algorithm optimizing UCB of GP posterior. \\
\textsf{KernelSelfSparring}\citep{sui2017multi}: A comparison-only algorithm that uses Thompson Sampling to optimize comparisons. We note that since $\truthfunc\ne \compfunc$, optimizing comparisons cannot lead to the global optimum. \\
\textsf{MF-GP-UCB}\citep{kandasamy2016multi}: Although MF-GP-UCB is not directly applicable in our case, we try to use it by using comparisons as the lower fidelity. When the algorithm selects to query the lower fidelity on $x_t$, we compare $x$ to a random point $X\in \featurespace$ and use the result as feedback, the same process as \shiftAlgoName.

\textbf{Experiment Setup.}  We apply common techniques in Bayesian optimization to set up hyperparameters of each algorithm (we detail the implementation in appendix). The target functions $\truthfunc, \compfunc$ are set to functions from the multi-fidelity literature, in particular Currin exponential (CurrinExp, $d=2$) and Borehole ($d=8$) \citep{DBLP:journals/technometrics/XiongQW13}. We note that $\truthfunc$ and $\compfunc$ have different values and maximizers. All methods use the RBF kernel for GP.
For all methods we compute the simple regret (\ref{eqn:regret_def}) w.r.t. $\truthfunc$ \footnote{We find that KernelSelfSparring is extremely slow for $d=8$ so we only test it for CurrinExp.}. The results are averaged over 20 runs, with a total budget of $\budget=100$.

\textbf{Cost Ratio.} In practice, the relation between costs of labels and comparisons can be complex. We call $\frac{\costdirect}{\costcomp}$ the \emph{cost ratio} between labels and comparisons; the larger the cost ratio, the cheaper the comparisons. Our algorithm generally works for a cost ratio $\frac{\costdirect}{\costcomp}>1$. We test the performance under various cost ratios in our experiment. For a fair comparison with MF-GP-UCB, we also use their setup of $\costcomp=0.1$ and $\costdirect=1$. 

\ssp
\subsection{Results}
\ssp

The results are summarized in Figure \ref{fig:expr_result}. Firstly we compare the performance on CurrinExp by varying the total budget from 10 to 100 (Figure \ref{fig:currin_budget}). \shiftAlgoName shows the best performance over all budget setups. It is worth noting that 
MF-GP-UCB performs worse than label-only GP-UCB in our setting; this is because the target function of MF-GP-UCB in this case essentially optimizing the function $\compprobfunc$, which is bounded in $[0,1]$, resulting in a very large approximation bias. In contrast, \shiftAlgoName is able to use comparisons in an efficient way to reduce the search space for optimization.

Then in Figure \ref{fig:currin_ratio}, we fix the total budget to be $\Lambda=100$ and cost of labels $\costdirect=1$, and vary the cost ratio from 1 to 10 by varying comparison costs. \shiftAlgoName achieves the best performance for all setups except when $\costcomp=\costdirect=1$ and it is worse than the label-only GP-UCB algorithm. This is expected since our algorithm targets to use cheaper comparisons. Our algorithm can be more effective even with a fairly small cost ratio.

Finally, the result on Borehole is depicted in Figure \ref{fig:borehole_budget}. As with CurrinExp, \shiftAlgoName achieves the best performance with a large gap under all budget setups. Due to space limit we put the varying cost ratio result, as well as other experiments in the appendix.

\msp	
\section{Conclusion}
\ssp
We consider a novel dueling-choice setting when both direct queries and comparisons are available for non-convex optimization. We propose the \shiftAlgoName algorithm that 
can achieve benign regret rates in the dueling-choice setting, and can adapt to unknown biases in the comparisons. Our algorithm can also be of independent interest for other multi-fidelity or transfer learning settings where information gleaned from one fidelity or source domain can be actively transferred to optimize the target domain function, under milder conditions than existing literature.

\clearpage
\bibliography{yichongref}
\clearpage
\appendix
\onecolumn

\section{Reviews of the GP-UCB and IGP-UCB Algorithm}

The GP-UCB \citep{srinivas2009gaussian} and IGP-UCB \cite{chowdhury2017kernelized} can be unified as in Algorithm \ref{algo:igp-ucb}. The algorithms only differ at their assumptions and thus the choice of $beta_t$. Our setting of $\boundr_t$ and $\boundlab_{t}$ is similar to IGP-UCB, as we focus more on the agnostic function setting.

\begin{algorithm}[ht!]
	\caption{GP-UCB and IGP-UCB}
	\begin{algorithmic}[1]
		\Require Budget $\budget$
		\State Set $D^l_0=\emptyset$, $(\labmu_0,\labsig_0)=(0,\kernel^{1/2}),t\leftarrow 0$
		\For{$t=1,2,...,\lowerquery$}
		\State Compute $x_t = \argmax_{\feature\in \featurespace} \mu_{t-1}(\feature) + \boundlab_{t}\sigma_{t-1}(\feature)$
		\State Query $f(x_t)$ and obtain feedback $y_t$
		\State Use $y_t$ and (\ref{eqn:bayes_update}) to perform posterior updates, and obtain $\labmu_{t}, \labsig_t$
		\EndFor
	\end{algorithmic}
	\label{algo:igp-ucb}
\end{algorithm}

\section{Experiment Details}
We apply basic techniques in Bayesian Optimization to conduct the experiments.\\
\textbf{Initial queries:} All the algorithms were initialized with uniform random queries with an initial budget of $\budget_0=10$. For multi-fidelity methods (MF-GP-UCB and \shiftAlgoName), we use $\budget_0/2$ on comparisons and $\budget_0/2$ on direct queries; for GP-UCB we use all $\budget_0$ on labels.\\
\textbf{Choice of kernel parameters:} We estimate the kernel bandwidth and scale by maximizing marginal likelihood with respect to the initial random queries. We also update the kernel parameters by maximizing the marginal likelihood for the GP over the lower fidelity function after every 20 iterations, and for the GP over the true function after every 5 iterations. \\
\textbf{Setup of $\boundlab_t$ and $\boundr_t$} We follow MF-GP-UCB and set $\boundlab = 0.5*\log(2*t+\epsilon)=\boundr_t$.\\
\textbf{Choosing query points.} We use the DiRect algorithm \citep{Jones1993} to maximize the marginal likelihood subject to parameter bounds, and to find the next query points.

\subsection{Additional Results on Borehole}
\begin{figure}
	\centering
	\includegraphics[width=0.35\textwidth]{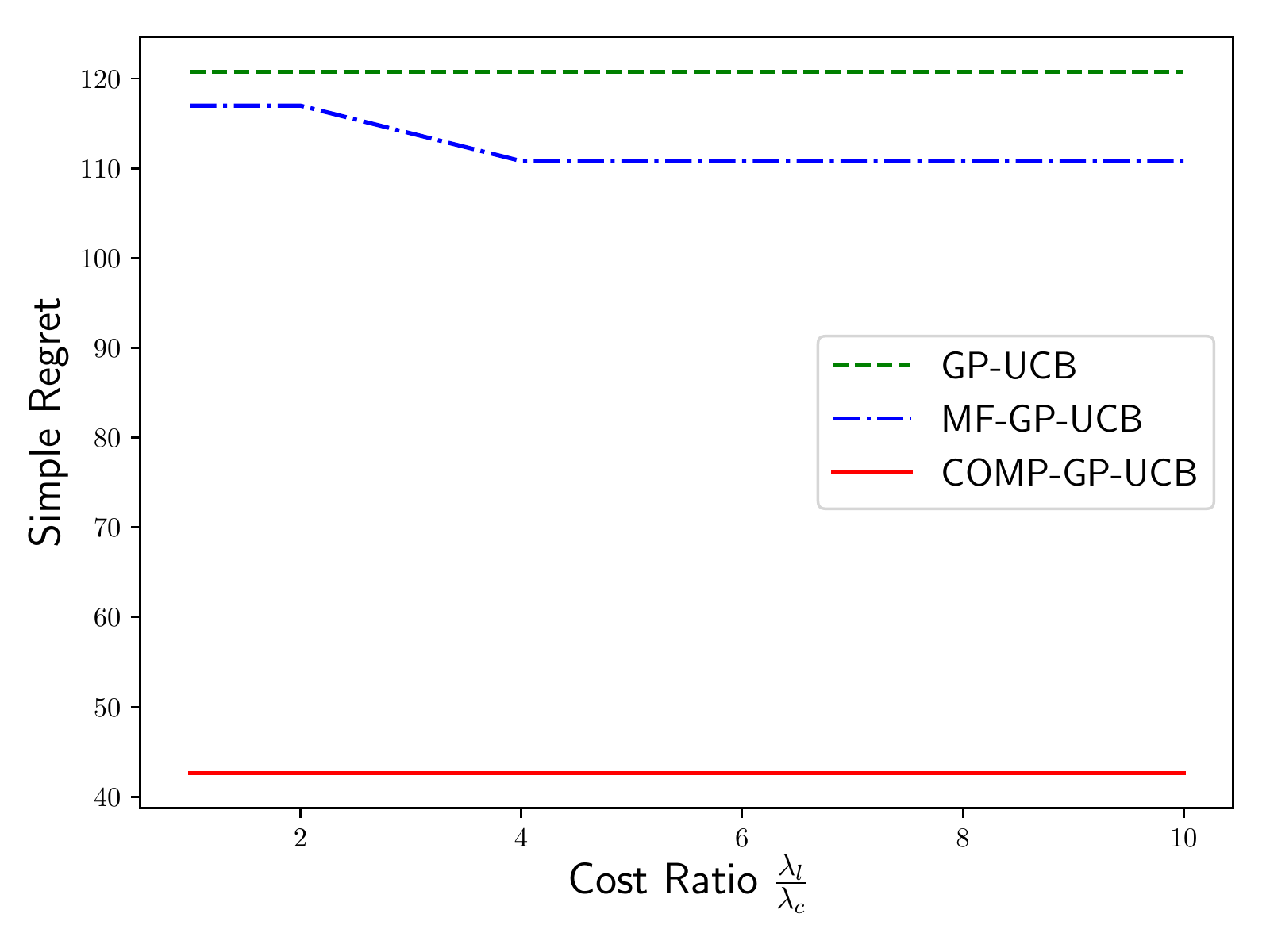}
	\caption{Results on Borehole function with varying cost ratio. \label{fig:borehole_ratio}}
\end{figure}

Figure \ref{fig:borehole_ratio} depicts the results on \textbf{Borehole} function with varying cost ratio. Different than CurrinExp, \shiftAlgoName displays an advantage over the baselines in all cost ratios including $\costdirect=\costcomp$. We suspect this can be because the lower fidelity is easier to explore than the higher fidelity for the Borehole function.

\subsection{Results with Single Fidelity and Same Cost}
Although \shiftAlgoName is designed for the case where comparisons are cheaper than direct queries, it is also interesting to see how it performs when comparisons and direct queries cost the same. We conduct an experiment with $\truthfunc=\compfunc$ and $\costcomp=\costdirect=1$ in Figure \ref{fig:expr_singlefid}. Note that MF-GP-UCB is not applicable in this setting since it is for multi-fidelity setting. \shiftAlgoName performs on par (for CurrinExp) or better (for Borehole) than the baselines in our result. Note that while our theory suggests that the convergence rate of \shiftAlgoName is the same as GP-UCB when $\costcomp=\costdirect$ and $\truthfunc=\compfunc$, in practice the underlying function $\compprobfunc$ (of \shiftAlgoName) might be easier to optimize than $\truthfunc$, because it is bounded and can be smoother than $\truthfunc$. We note that cost ratio $\frac{\costdirect}{\costcomp}=2$ is enough for \shiftAlgoName to surpass the performance of GP-UCB on CurrinExp (see Figure \ref{fig:currin_ratio}). 

\begin{figure*}
	\msp
	\centering
	\begin{subfigure}[b]{0.33\textwidth}
		\centering
		\includegraphics[width=\textwidth]{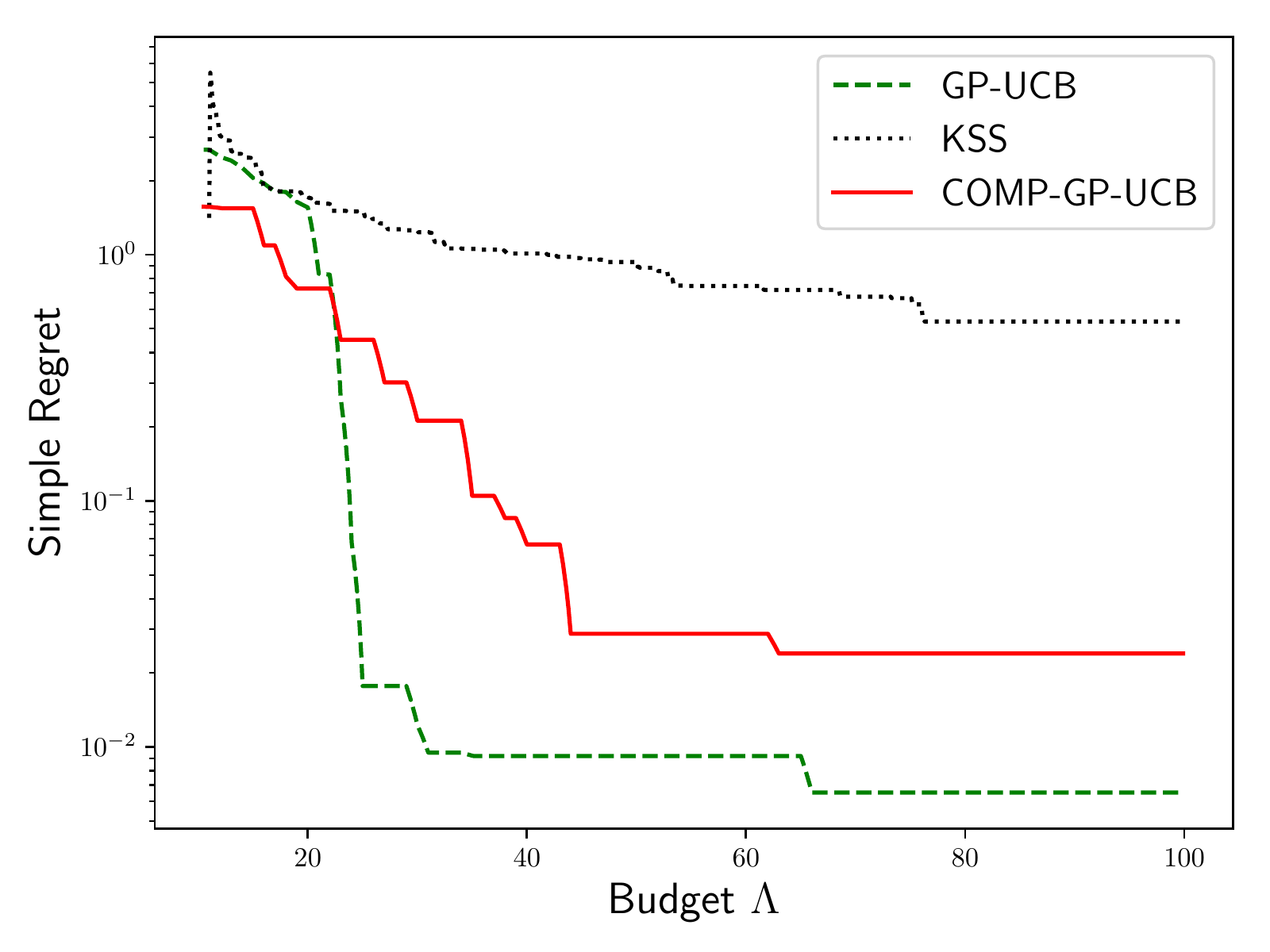}
		\caption{CurrinExp\label{fig:currin_singlefid}}
	\end{subfigure}%
	\begin{subfigure}[b]{0.33\textwidth}
		\centering
		\includegraphics[width=\textwidth]{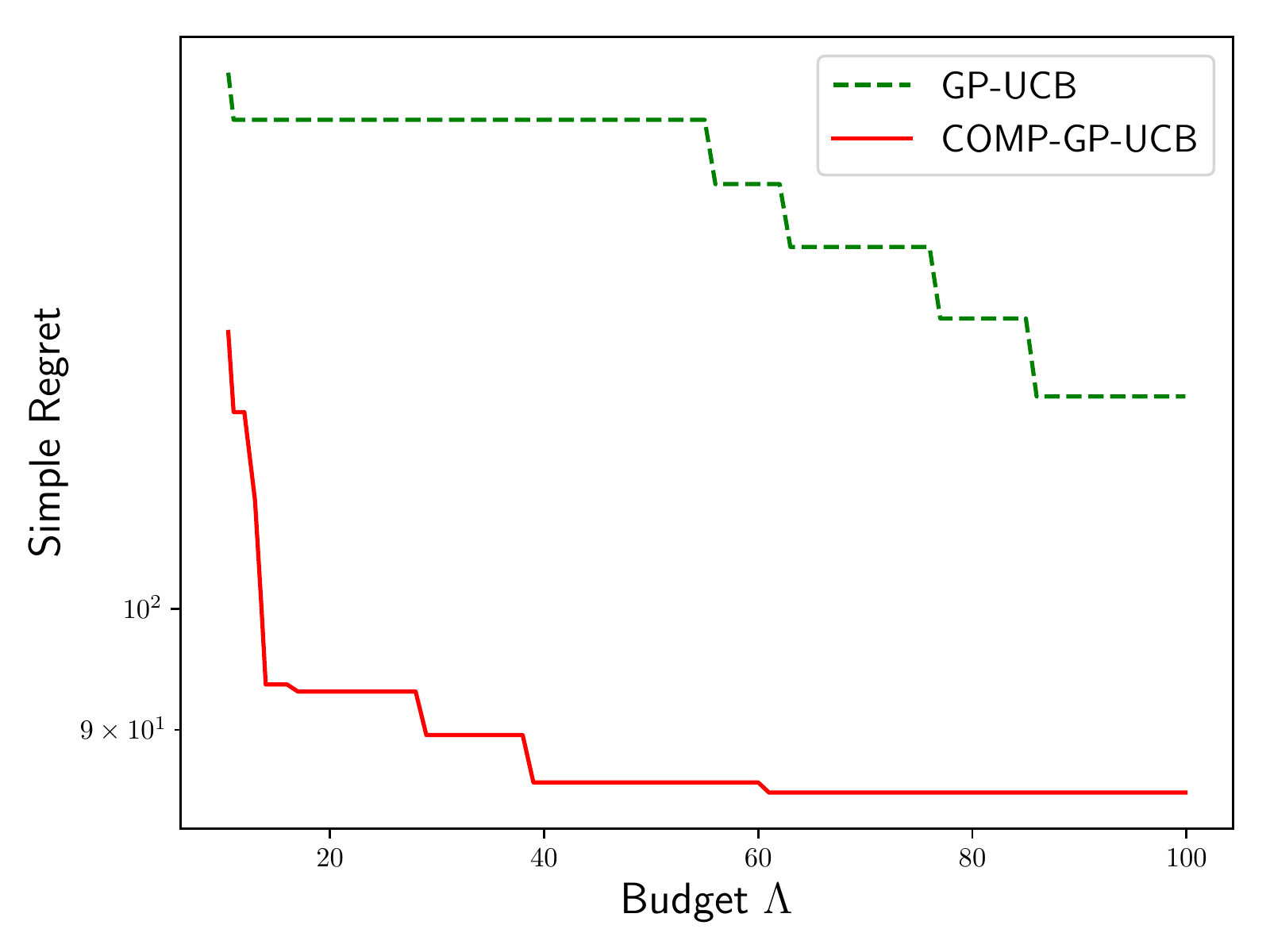}
		\caption{Borehole\label{fig:borehole_singlefid}}
	\end{subfigure}%
	\caption{Empirical results comparing \shiftAlgoName with baselines, under a single fidelity and $\costcomp=\costdirect=1$. KSS stands for KernelSelfSparring. \label{fig:expr_singlefid}}
\end{figure*}

\section{Proofs}
\subsection{Proof of Proposition \ref{prop:liplink}}
	Let $x_c^*$ be a maximizer of $\compfunc$. Because of the link function assumption, $x_c^*$ is also a optimizer of $\compprobfunc$. We have
	\begin{align*}
	\compprobfunc^*-\compprobfunc(x)&=\mathbb{E}[\linkfunc(\compfunc(x_c^*)-\compfunc(X))]-\mathbb{E}[\linkfunc(\compfunc(x)-\compfunc(X))]\\
	&=\mathbb{E}[\linkfunc(\compfunc(x_c^*)-\compfunc(X))-\linkfunc(\compfunc(x)-\compfunc(X))]\\
	&\leq\mathbb{E}[\liplink|\compfunc(x_c^*)-\compfunc(x)|]=\liplink(\compfunc^*-\compfunc(x)).\\
	\end{align*}    
	The lower bound can be proved similarly.

	\subsection{Proof of Theorem \ref{thm:twophase_unbiased} and \ref{thm:twophase}}
	We show Theorem \ref{thm:twophase} and Theorem \ref{thm:twophase_unbiased} follows as a direct corollary. 
	We first use results in IGP-UCB\citep{chowdhury2017kernelized} to obtain confidence bands of $\compprobfunc$:
	\begin{lemma}[Theorem 2, \citep{chowdhury2017kernelized}]\label{lem:confidence_imp2}
		Define $\boundr_t = 2\|\compprobfunc\|_\kernel+\sqrt{2\left(\infogain{t-1}{\featurespace}+1+\log(2/\delta)\right)}$. Then with probability $1-\delta/2$ we have for all time $t$ and any point $\feature\in \featurespace$,
		\[|\rmu_{t-1}(x)-\compprobfunc(x)|\leq \boundr_t\rsigma_{t-1}(x).  \]
	\end{lemma}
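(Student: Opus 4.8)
The plan is to recognize this statement as a direct instantiation of the IGP-UCB confidence band (Theorem~2 of \citep{chowdhury2017kernelized}) applied to the Borda function $\compprobfunc$, with the comparison feedbacks playing the role of noisy observations of $\compprobfunc$. Three hypotheses of that theorem must be checked in our setting: (i) the target function lies in the RKHS $\mathcal{H}_\kernel$ with finite norm, (ii) the recorded observations are conditionally unbiased for the target function value at the query point, and (iii) the observation noise is conditionally sub-Gaussian. Since the statement already writes $\|\compprobfunc\|_\kernel$, hypothesis (i) is taken as given (i.e. $\compprobfunc$ is assumed to have finite kernel norm), so the work is to verify (ii) and (iii) and then quote the theorem.

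First I would establish unbiasedness. At iteration $t$ the algorithm (via the \textsc{Query} procedure) compares the chosen point $\feature_t$ against a point $\feature'$ drawn uniformly at random from $\featurespace$ and records the binary outcome $z_t\in\{0,1\}$, with $z_t=1$ iff $\feature_t\succ\feature'$. Conditioning on $\feature_t$ and the history $\mathcal{F}_{t-1}$, and averaging over both the random $\feature'$ and the stochasticity of the comparison, I get $\mathbb{E}[z_t\mid \feature_t,\mathcal{F}_{t-1}]=\mathbb{E}_{\feature'}[\Pr[\feature_t\succ\feature']]=\Pr[\feature_t\succ X]=\compprobfunc(\feature_t)$, which is exactly the definition of the Borda function. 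Hence the centered feedback $\varepsilon_t:=z_t-\compprobfunc(\feature_t)$ is a martingale difference sequence with respect to the filtration generated by the queries, which is the adaptivity structure required by the self-normalized bound underlying Theorem~2.

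Next I would verify the noise condition. Because $z_t\in\{0,1\}$ and $\compprobfunc(\feature_t)\in[0,1]$, the noise $\varepsilon_t$ lies in $[-1,1]$, so by Hoeffding's lemma it is conditionally $1$-sub-Gaussian. This lets the sub-Gaussian parameter $R$ appearing in Theorem~2 of \citep{chowdhury2017kernelized} be taken as an absolute constant, which is absorbed into the leading coefficient of $\boundr_t$. With (i)--(iii) in hand I would invoke Theorem~2 directly: it guarantees that, with probability at least $1-\delta/2$ (the $2/\delta$ inside the logarithm reflecting this confidence split), the posterior mean $\rmu_{t-1}$ computed from the comparison data $D^c_t$ through the Bayesian update \eqref{eqn:bayes_update} satisfies $|\rmu_{t-1}(x)-\compprobfunc(x)|\le\boundr_t\rsigma_{t-1}(x)$ simultaneously for all $t$ and all $x\in\featurespace$.

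The main obstacle I anticipate is bookkeeping rather than conceptual: aligning the regularization parameter of our update \eqref{eqn:bayes_update} (which uses $\labelerrbound$) and the absolute sub-Gaussian constant with the normalization in the cited theorem, so that the factor of $2$ in front of $\|\compprobfunc\|_\kernel$ and the form $\sqrt{2(\infogain{t-1}{\featurespace}+1+\log(2/\delta))}$ come out exactly as stated. Because the agnostic GP-UCB analysis treats $\labelerrbound$ as a fixed regularizer decoupled from the true (bounded) noise level, this matching is routine and no new concentration inequality beyond the one behind Theorem~2 is needed.
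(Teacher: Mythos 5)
The paper offers no proof of this lemma at all---it is imported verbatim as Theorem~2 of \citep{chowdhury2017kernelized}, applied to $\compprobfunc$ with the binary comparison outcomes as observations---and your proposal does exactly this, additionally verifying the hypotheses (finite RKHS norm of $\compprobfunc$, conditional unbiasedness of $z_t$ for $\compprobfunc(x_t)$, bounded hence sub-Gaussian noise) that the paper only asserts informally in the main text. Your approach is essentially the same as the paper's, just more explicit about why the cited theorem applies.
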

	
	Lemma \ref{lem:confidence_imp2} also applies to $\truthfunc, \feature\in \baset, \labmu,\labsig$ by setting $\boundlab_t=2\|\truthfunc\|_\kernel+\sqrt{2\left(\infogain{t-1}{\baset}+1+\log(1/\delta)\right)}$. 
	
	We also use the following lemma to bound the sum of posterior variances:
	\begin{lemma}[Lemma 8, \citep{kandasamy2016multi}]\label{lem:bound_sigmasum}
		Let $A\subseteq \featurespace$. Suppose we have n queries $(\feature_t)_{t=1}^n$ of which $s$ points are in $A$. Then the posterior $\sigma_t$ satisfies
		\[\sum_{x_t\in A} \sigma_{t-1}^2(x_t)\leq \frac{2}{\log(1+\eta^{-2})} \infogain{s}{A}. \]
	\end{lemma}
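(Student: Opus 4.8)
The plan is to prove this via the standard information-theoretic argument for Gaussian processes, combining an elementary scalar inequality with the chain rule for mutual information, while carefully handling the fact that the posterior variances appearing in the sum condition on \emph{all} previously queried points, whereas $\infogain{s}{A}$ only counts configurations of $s$ points that lie inside $A$. The proof decomposes into three links: a term-by-term variance-to-log bound, a monotonicity argument that restricts the conditioning to $A$, and a telescoping identity that recognizes the resulting log-sum as an information gain.

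First I would reduce each variance term to a logarithmic one. Under the standard normalization $\kernel(x,x)\le 1$, every posterior variance satisfies $\sigma_{t-1}^2(x_t)\le \kernel(x_t,x_t)\le 1$, so $\eta^{-2}\sigma_{t-1}^2(x_t)\in[0,\eta^{-2}]$. Applying the elementary inequality $u\le \frac{\eta^{-2}}{\log(1+\eta^{-2})}\log(1+u)$, valid for $u\in[0,\eta^{-2}]$ because $u\mapsto\log(1+u)$ is concave, with $u=\eta^{-2}\sigma_{t-1}^2(x_t)$ gives $\sigma_{t-1}^2(x_t)\le \frac{1}{\log(1+\eta^{-2})}\log(1+\eta^{-2}\sigma_{t-1}^2(x_t))$ for each $t$. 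Summing over the indices $t$ with $x_t\in A$ then reduces the claim to bounding $\sum_{x_t\in A}\log(1+\eta^{-2}\sigma_{t-1}^2(x_t))$ by $2\infogain{s}{A}$.

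The key step, and the main obstacle, is relating this log-sum to an information gain restricted to $A$. The difficulty is that $\sigma_{t-1}^2(x_t)$ is the posterior variance given the first $t-1$ queries, which in general include points outside $A$, so the sum does not immediately telescope into a quantity defined purely on $A$. I would resolve this using monotonicity of conditional variance for jointly Gaussian variables: conditioning on additional observations can only decrease variance, hence $\sigma_{t-1}^2(x_t)\le \tilde\sigma_{t-1}^2(x_t)$, where $\tilde\sigma_{t-1}^2(x_t)$ denotes the posterior variance conditioned only on the \emph{earlier} queried points that lie in $A$. Since $\log$ is increasing, replacing each term by its $A$-only counterpart only increases the sum, which is the direction I need for an upper bound.

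Finally I would invoke the chain rule for mutual information. Enumerating the $s$ points inside $A$ in query order as $x_{(1)},\dots,x_{(s)}$ and writing $S=\{x_{(1)},\dots,x_{(s)}\}\subseteq A$, the sequential decomposition for jointly Gaussian $(\truthfunc,y)$ gives $\frac{1}{2}\sum_{j=1}^{s}\log(1+\eta^{-2}\tilde\sigma_{(j-1)}^2(x_{(j)}))=I(\truthfunc_{S}+\varepsilon_S;\truthfunc_S)$, where $\tilde\sigma_{(j-1)}^2$ conditions exactly on $x_{(1)},\dots,x_{(j-1)}$. Because $S$ is one particular $s$-subset of $A$, this mutual information is at most $\infogain{s}{A}$ by the definition of the maximum information gain. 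Chaining the three bounds produces $\sum_{x_t\in A}\sigma_{t-1}^2(x_t)\le \frac{2}{\log(1+\eta^{-2})}\infogain{s}{A}$, as claimed. The one subtlety I would check is that the sequential variance identity is taken at the same noise level $\eta$ used in the definition of $\infogain{s}{A}$, so that the telescoped log-sum matches the information-gain objective exactly.
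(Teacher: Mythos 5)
Your proposal is correct and matches the standard argument: the paper does not prove this lemma itself (it quotes it as Lemma~8 of \citep{kandasamy2016multi}), and the proof in that reference (building on \citep{srinivas2009gaussian}) is exactly your three-step chain --- the concavity inequality $u\le\frac{\eta^{-2}}{\log(1+\eta^{-2})}\log(1+u)$ applied with $u=\eta^{-2}\sigma_{t-1}^2(x_t)$, monotonicity of Gaussian posterior variance to discard the conditioning on queries outside $A$, and the mutual-information chain rule identifying the resulting log-sum with $2I(\truthfunc_S+\varepsilon_S;\truthfunc_S)\le 2\infogain{s}{A}$. The subtleties you flag (the normalization $\kernel(x,x)\le 1$ and using the same noise level $\eta$ in the posterior updates as in the definition of $\infogain{s}{A}$) are precisely the conditions under which the constant $\frac{2}{\log(1+\eta^{-2})}$ is obtained.
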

	
	Suppose the event in Lemma \ref{lem:confidence_imp2} holds for $\truthfunc$ and $\compprobfunc$. We first prove the first bound by looking at comparison queries.
	Firstly, in the first phase when we compute $x_t$ in step $\ref{step:firstphasext}$ we have
	\begin{align}
	\compprobfunc^*-\compprobfunc(x_t)&\leq \rmu_{t-1}(x^*_r)+\boundr_t\rsigma_{t-1}(x^*_r)-(\rmu_{t-1}(x_t)-\boundr_t\rsigma_{t-1}(x_t))\nonumber\\
	&\leq \rmu_{t-1}(x_t)+\boundr_t\rsigma_{t-1}(x_t)-(\rmu_{t-1}(x_t)-\boundr_t\rsigma_{t-1}(x_t))\nonumber\\
	&= 2\boundr_t\rsigma_{t-1}(x_t). \label{eqn:reg_sigma_}
	\end{align}
	The first inequality uses Lemma \ref{lem:confidence_imp2}, and the second inequality is from that $x_t$ is the maximizer of $\rmu_{t-1}(x)+\boundr_t\rsigma_{t-1}(x)$.
	
	Suppose we finish phase 1 and enter phase 2. Let $T_0$ be the time we leave phase 1, then we must have $\boundr_{T_0}\rsigma_{T_0-1}(x_{T_0})\leq \threscomp$. So
	\begin{align}
	S(\Lambda)&\leq \truthfunc^*-\truthfunc(x_{T_0})\nonumber\\
	&\leq \compfunc(x^*)-\compfunc(x_{T_0})+\funcdiff\nonumber\\
	&\leq \compfunc^*-\compfunc(x_{T_0})+\funcdiff\nonumber\\
	&\leq \liplinklower(\compprobfunc^*-\compprobfunc(x_{T_0}))+\funcdiff\leq 2\liplinklower\threscomp+\funcdiff.\label{eqn:p1b1}    
	\end{align}  
	The second inequality is from Assumption \ref{asm:funcdiff}, the fourth inequality is from Assumption \ref{asm:bound_diff}, and the last inequality is from (\ref{eqn:reg_sigma_}).
	
	If we do not finish phase 1, then the number of comparison queries is $N_1\geq \upperquery-1$ and $N_1\leq \upperquery$, and we have
	\begin{align}
	\left(\sum_{t} \left(\compprobfunc^*-\compprobfunc(x_t)\right)\right)^2&\leq N_1\sum_{t:\querytype_t=\text{comparison}} \left(f_r^*-f_r(x_t)\right)^2\nonumber\\
	&\leq N_1\sum_{t:\querytype_t=\text{comparison}} 4\left(\boundr_t\rsigma_{t-1}(x_t)\right)^2\nonumber\\
	&\leq C_1 N_1\left(\boundr_{N_1}\right)^2 \infogain{N_1}{\baset}.\label{eqn:bound_fr}
	\end{align}
	Here $C_1=\frac{8}{\log(1+\labelerrbound^{-2})}$.
	The first step is from the Cauchy-Schwarz inequality, and the second step is from (\ref{eqn:reg_sigma_}); the last step uses Lemma \ref{lem:bound_sigmasum}. 
	
	So
	\begin{align}
	\simpleregret(\Lambda)&\leq \frac{1}{N_1}\sum_{t} \left(f^*-f(x_t)\right)\nonumber\\
	&\leq \frac{2\liplinklower}{N_1}\sum_{t} \left(\compprobfunc^*-\compprobfunc(x_t)\right)+\funcdiff\nonumber\\
	&\leq \frac{\boundr_{N_1}}{N_1}\sqrt{C_1 N_1 \infogain{N_1}{\baset}}+\funcdiff\nonumber\\
	&\leq \frac{C\boundr_{\upperquery}}{\upperquery}\sqrt{ \upperquery \infogain{\upperquery}{\baset}}+\funcdiff\nonumber\\
	&\leq C\left(B+\sqrt{\left(\infogain{\upperquery}{\baset}+\log(1/\delta)\right)}\right)\sqrt{\frac{\infogain{\upperquery}{\baset}}{\upperquery}}+\funcdiff.\label{eqn:p1b2}
	\end{align}
	The second inequality follows from the same process as in (\ref{eqn:p1b1}); the third inequality follows from (\ref{eqn:bound_fr}); the fourth inequality is from $\upperquery \geq N_1\geq \upperquery-1$. Here $C$ is a constant whose value may change from line to line.
	Combining (\ref{eqn:p1b1}) and (\ref{eqn:p1b2}) we get the first bound.
	
	To show the second bound, we examine the regret from direct queries. 
	We first show that $x^*$ is never excluded from our feasible region:
	\begin{claim}\label{claim:xstar_in}
		$\confr_t(x^*)\geq 0$ for all $t$.     
	\end{claim}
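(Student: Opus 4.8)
The plan is to unfold the definition $\confr_t(x^*)=\rmu_{t-1}(x^*)+\boundr_t\rsigma_{t-1}(x^*)-\estbestfr+\liplink\funcdiff$ and bound each piece from below. The argument rests on three ingredients, all already available earlier in the paper, and conditions throughout on the high-probability event of Lemma \ref{lem:confidence_imp2} (which holds with probability at least $1-\delta/2$); on that event the confidence band for $\compprobfunc$ is valid for every $t$ and every point simultaneously, so no union bound over $x^*$ is needed.

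First I would use the upper half of the confidence band at $x^*$ to get that the comparison UCB genuinely dominates $\compprobfunc$, i.e. $\rmu_{t-1}(x^*)+\boundr_t\rsigma_{t-1}(x^*)\geq \compprobfunc(x^*)$. Second, I would argue that $\estbestfr$ is a valid \emph{lower} bound on the comparison optimum: applying the lower half of the same band at the exit time $T_0$ of phase~1 gives
\[
\estbestfr=\rmu_{T_0-1}(x_{T_0})-\boundr_{T_0}\rsigma_{T_0-1}(x_{T_0})\leq \compprobfunc(x_{T_0})\leq \compprobfunc^*.
\]
Third, I would invoke the chain of inequalities displayed just before Algorithm \ref{algo:comp-gp-ucb-twophase}, which combines Assumptions \ref{asm:bound_diff} and \ref{asm:funcdiff} to yield $\compprobfunc^*-\compprobfunc(x^*)\leq \liplink\funcdiff$, equivalently $\compprobfunc(x^*)\geq \compprobfunc^*-\liplink\funcdiff$.

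Chaining these three facts finishes the claim:
\[
\confr_t(x^*)\;\geq\;\compprobfunc(x^*)-\estbestfr+\liplink\funcdiff\;\geq\;\bigl(\compprobfunc^*-\liplink\funcdiff\bigr)-\estbestfr+\liplink\funcdiff\;=\;\compprobfunc^*-\estbestfr\;\geq\;0.
\]
There is no hard analytic step here; the content is entirely a consequence of the confidence band. The only thing to be careful about is the bookkeeping across the two phases: $\estbestfr$ is frozen once at Step \ref{step:lcb}, whereas the claim must hold for \emph{every} $t$ in phase~2, so the UCB term must be read with the current $\boundr_t,\rsigma_{t-1}$ while the time-independent inequality $\estbestfr\leq \compprobfunc^*$ is precisely what decouples the exit time $T_0$ from the running index $t$. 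The role of the additive $\liplink\funcdiff$ slack is exactly to cancel the gap $\compprobfunc^*-\compprobfunc(x^*)$, which is why the misspecification bias enters the filter $\confr_t$ in this form.
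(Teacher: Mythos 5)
Your proof is correct and follows essentially the same route as the paper's: both rest on the validity of the confidence band from Lemma \ref{lem:confidence_imp2} (UCB at $x^*$ dominates $\compprobfunc(x^*)$, and $\estbestfr$ is a valid lower bound on $\compprobfunc^*$) combined with the chain $\compprobfunc^*-\compprobfunc(x^*)\leq \liplink(\compfunc^*-\compfunc(x^*))\leq \liplink\funcdiff$ from Assumptions \ref{asm:bound_diff} and \ref{asm:funcdiff}. The only cosmetic difference is that you bound $\estbestfr$ via the lower confidence band at the specific exit point $x_{T_0}$, while the paper bounds it through $\max_{x'}\{\rmu(x')-\boundr\rsigma(x')\}\leq \compprobfunc^*$; these are the same observation.
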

	
	\begin{proof}
		Suppose $x^*$ is a maximizer of $\truthfunc$ in $\featurespace$. Then we have
		\begin{align*}
		\confr_t(x^*)&=\rmu_t(x^*)+\boundr_t\rsigma_t(x^*)-\max_{x'} \left\{\rmu_t(x' )-\boundr_t\rsigma_t(x')\right\}+\liplink\funcdiff\\
		&\geq\compprobfunc(x^*)-\compprobfunc^*+\liplink\funcdiff\\
		&=-\liplink(\compfunc^*-\compfunc(x^*))+\liplink\funcdiff\\
		&\geq -\liplink(\truthfunc^*-\truthfunc(x^*)+\funcdiff)+\liplink\funcdiff\\
		&= -\liplink\funcdiff+\liplink\funcdiff=0.
		\end{align*}
		The first inequality is from Lemma \ref{lem:confidence_imp2}; 
		the second inequality is from Assumption \ref{asm:funcdiff}.
	\end{proof}
	
	Let $N$ be the (random) total number of queries under budget $\Lambda$. We know that the support of $N$ lies in $[\lowerquery,\upperquery]$; we now suppose $n$ is any number in $[\lowerquery,\upperquery]$, and prove properties of Algorithm \ref{algo:comp-gp-ucb-twophase} when it uses $n$ queries.
	
	For any set $A\subseteq \featurespace$, let $\nqueryr_n(A)$ be the number of comparison queries into $A$ when the algorithm has made $n$ queries, and $\nqueryl_n(A)$ be the number of direct queries. We have
	\[n=\nqueryr_n(\featurespace)+\nqueryl_n(\basec)+\nqueryl_n(\baset) \]
	since $\basec \cup \baset=\featurespace$. We bound the first two terms using the following two lemmas:
	\begin{lemma}\label{lem:bound_comp} There exists a constant $C_{\kernel}$ dependent on $\kernel,d$ such that
		$\nqueryr_n(\featurespace)\leq C_{\kernel}\left(\frac{\boundr_n}{\threscomp}\right)^{p+2},$
		where $p=d$ for SE kernel and $p=2d$ for Mat\'{e}rn kernel.
	\end{lemma}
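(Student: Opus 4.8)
The plan is to bound $\nqueryr_n(\featurespace)$, the number of comparison queries after $n$ total queries, by combining a per-query lower bound on the comparison posterior variance with the aggregate bound on the sum of posterior variances from Lemma~\ref{lem:bound_sigmasum}.

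First I would establish the key per-query fact: whenever the algorithm issues a comparison query at some time $t$, the queried point satisfies $\boundr_t \rsigma_{t-1}(x_t) \geq \threscomp$, with at most one exception. Indeed, in phase~1 the loop (ending at Step~\ref{step:firstphaseends}) keeps issuing comparisons precisely while $\boundr_t \rsigma_{t-1}(x_t) > \threscomp$, so only the single query that triggers the exit can violate this; in phase~2 the branch at Step~\ref{step:checkconf_p2} issues a comparison only when $\boundr_t \rsigma_{t-1}(x_t) \geq \threscomp$. Since $\boundr_t$ is nondecreasing in $t$ (as $\infogain{t-1}{\featurespace}$ is nondecreasing) and every query index is at most $n$, this yields $\rsigma_{t-1}(x_t) \geq \threscomp/\boundr_n$ for every comparison query but one.

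Next I would apply Lemma~\ref{lem:bound_sigmasum} to the subsequence of comparison queries. Because the comparison posterior $\rsigma$ is updated only on comparison queries, this subsequence is exactly the query sequence of the comparison GP; taking $A=\featurespace$ and letting $N_r = \nqueryr_n(\featurespace)$ denote the number of such queries gives $\sum_{\mathrm{comp}} \rsigma_{t-1}^2(x_t) \leq \frac{2}{\log(1+\labelerrbound^{-2})}\, \infogain{N_r}{\featurespace}$. Combining with the per-query bound, each of the (at least $N_r-1$) comparison queries contributes at least $(\threscomp/\boundr_n)^2$ to the left side, so
\[ (N_r - 1)\left(\frac{\threscomp}{\boundr_n}\right)^2 \leq \frac{2}{\log(1+\labelerrbound^{-2})}\,\infogain{N_r}{\featurespace}, \]
i.e.\ $N_r \lesssim (\boundr_n/\threscomp)^2\,\infogain{N_r}{\featurespace}$.

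Finally I would substitute the kernel-specific information-gain bounds---$\infogain{N_r}{\featurespace} = O((\log N_r)^{d+1})$ for the SE kernel and $\infogain{N_r}{\featurespace} = O(N_r^{\alpha}\log N_r)$ with $\alpha = \tfrac{d(d+1)}{2\nu+d(d+1)}$ for the Mat\'ern kernel---and solve the resulting self-referential inequality for $N_r$. The hard part will be this last step, since $N_r$ appears on both sides. The idea is to absorb the logarithmic (and, for Mat\'ern, the polynomial) $N_r$-dependence into the exponent using $(\log m)^a \leq C_\epsilon m^\epsilon$ for arbitrarily small $\epsilon$: for SE this turns $N_r \lesssim (\boundr_n/\threscomp)^2 N_r^\epsilon$ into $N_r \leq C_\kernel (\boundr_n/\threscomp)^{d+2}$ after choosing $\epsilon = d/(d+2)$; for Mat\'ern one isolates $N_r^{1-\alpha} \lesssim (\boundr_n/\threscomp)^2 \log N_r$ and argues analogously to reach the exponent $p+2 = 2d+2$. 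Throughout I would track the constant $C_\kernel$, which depends only on $\kernel$ and $d$ through $\labelerrbound$, the $O(\cdot)$ constants in the information-gain bounds, and the chosen $\epsilon$.
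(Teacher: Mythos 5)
Your route is genuinely different from the paper's. The paper never touches the information-gain rates here: it takes an $\varepsilon_n$-cover of $\featurespace$ and invokes Lemma 13 of \citet{kandasamy2016multi} (posterior variance in a ball of diameter $D$ with $s$ local samples is at most $C_{SE}D^2+\labelerrbound^2/s$, resp. $C_{Mat}D+\labelerrbound^2/s$), so that each cover element can receive at most $\lceil 2(\labelerrbound\boundr_n/\threscomp)^2\rceil$ comparison queries before its variance drops below $(\threscomp/\boundr_n)^2$ and the comparison trigger can never fire there again; multiplying by the covering number gives $C_\kernel(\boundr_n/\threscomp)^{p+2}$ directly and deterministically. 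Your first two steps (every comparison query except possibly the phase-1 exit query satisfies $\boundr_t\rsigma_{t-1}(x_t)\geq\threscomp$, hence $\rsigma_{t-1}(x_t)\geq\threscomp/\boundr_n$ by monotonicity of $\boundr_t$; then Lemma~\ref{lem:bound_sigmasum} applied to the comparison GP with $A=\featurespace$) are correct and give the valid implicit inequality $(N_r-1)(\threscomp/\boundr_n)^2\lesssim\infogain{N_r}{\featurespace}$. For the SE kernel your resolution is also correct: absorbing $(\log N_r)^{d+1}\leq C_\epsilon N_r^{d/(d+2)}$ yields exactly $N_r\leq C_\kernel(\boundr_n/\threscomp)^{d+2}$, matching $p=d$.

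The gap is in the Mat\'ern case, precisely at the step you flagged as "the hard part." From $N_r^{1-\alpha}\lesssim(\boundr_n/\threscomp)^2\log N_r$ with $\alpha=\frac{d(d+1)}{2\nu+d(d+1)}$, solving for $N_r$ gives an exponent of $\frac{2}{1-\alpha}=2+\frac{d(d+1)}{\nu}$ (plus an arbitrarily small slack from absorbing the log), not $2d+2$. These agree only when $\nu\geq(d+1)/2$; for rougher kernels, e.g.\ $d=1$ and $\nu$ slightly above $1/2$, your bound is $N_r\lesssim(\boundr_n/\threscomp)^{6}$ versus the claimed $(\boundr_n/\threscomp)^{4}$, and the discrepancy grows as $\nu\downarrow 0$. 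There is no way to "argue analogously" to the SE case here, because the polynomial growth of $\infogain{N_r}{\featurespace}$ cannot be absorbed into an arbitrarily small power of $N_r$. So your argument proves the lemma with a larger, $\nu$-dependent $p$ for Mat\'ern, but not the stated $p=2d$. To recover $p=2d$ unconditionally you need the paper's local/covering argument, whose exponent $2d$ comes from the covering radius $\varepsilon_n\propto(\threscomp/\boundr_n)^2$ forced by the linear-in-$D$ variance decay for Mat\'ern, independently of $\nu$. (A minor additional remark: the resulting $C_\kernel$ in your argument also depends on $\labelerrbound$ and the domain radius $r$, as does the paper's; this is presumably subsumed in "dependent on $\kernel,d$".)
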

	This lemma is proved in Section \ref{sec:proof_lem_bound_comp}.
	\begin{lemma}\label{lem:bound_outbase}
		$\nqueryl_n(\basec)=0$.
	\end{lemma}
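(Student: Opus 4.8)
The plan is to show that every direct (label) query is made at a point of $\baset$, so none falls in the complement $\basec$, whence $\nqueryl_n(\basec)=0$. Direct queries occur only in the second phase, at $x_t=\argmax_{x:\confr_t(x)\geq 0}\labmu_{t-1}(x)+\boundlab_t\labsig_{t-1}(x)$, and only when the branch at Step~\ref{step:checkconf_p2} fails, i.e. when $\boundr_t\rsigma_{t-1}(x_t)<\threscomp$ (recall $\gamma=\threscomp$). Thus a direct query at $x_t$ supplies two facts: the feasibility constraint $\confr_t(x_t)=\rmu_{t-1}(x_t)+\boundr_t\rsigma_{t-1}(x_t)-\estbestfr+\liplink\funcdiff\geq 0$, and the smallness of its comparison uncertainty $\boundr_t\rsigma_{t-1}(x_t)<\threscomp$. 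I would combine these with the confidence band of Lemma~\ref{lem:confidence_imp2} to lower bound $\compprobfunc(x_t)$.

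First I would pin down the lower confidence value $\estbestfr$ produced at the end of phase~1. Writing $T_0$ for the exit time, the until-condition gives $\boundr_{T_0}\rsigma_{T_0-1}(x_{T_0})\leq\threscomp$, and since $x_{T_0}$ maximizes the comparison UCB, applying the band at a maximizer $x_r^*$ of $\compprobfunc$ yields $\rmu_{T_0-1}(x_{T_0})+\boundr_{T_0}\rsigma_{T_0-1}(x_{T_0})\geq\compprobfunc^*$. Subtracting twice the posterior width then gives $\estbestfr=\rmu_{T_0-1}(x_{T_0})-\boundr_{T_0}\rsigma_{T_0-1}(x_{T_0})\geq\compprobfunc^*-2\threscomp$; that is, $\estbestfr$ is a lower bound for $\compprobfunc^*$ tight up to $2\threscomp$. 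This is exactly where the first phase earns its keep.

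Next I would convert feasibility into a bound on the true Borda value. The band at $x_t$ gives $\rmu_{t-1}(x_t)\leq\compprobfunc(x_t)+\boundr_t\rsigma_{t-1}(x_t)$, so $\confr_t(x_t)\geq 0$ yields $\compprobfunc(x_t)+2\boundr_t\rsigma_{t-1}(x_t)\geq\estbestfr-\liplink\funcdiff$. Using the direct-query branch $\boundr_t\rsigma_{t-1}(x_t)<\threscomp$ together with the phase-1 bound on $\estbestfr$, I obtain $\compprobfunc(x_t)>\estbestfr-\liplink\funcdiff-2\threscomp\geq\compprobfunc^*-\liplink\funcdiff-4\threscomp$. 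Hence $x_t\in\baset$, so $x_t\notin\basec$, and no direct query lands in $\basec$.

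All steps are conditioned on the $1-\delta/2$ event of Lemma~\ref{lem:confidence_imp2} holding uniformly over $t$, which I would invoke at the outset. The one place needing care — the main, if minor, obstacle — is the bookkeeping that ties the fixed constant $\estbestfr$ (computed once at $T_0$) to the time-varying widths in the second-phase constraint, and verifying that the two slacks $2\threscomp$ (from $\estbestfr$) and $2\threscomp$ (from the direct-query branch, using $\gamma=\threscomp$) sum to exactly the $4\threscomp$ in the definition of $\baset$; this alignment is what makes the inclusion hold with no room to spare.
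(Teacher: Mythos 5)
Your proposal is correct and follows essentially the same route as the paper's proof: on the confidence-band event, combine the feasibility constraint $\confr_t(x_t)\geq 0$ with the direct-query branch condition $\boundr_t\rsigma_{t-1}(x_t)<\threscomp$ to lower bound $\compprobfunc(x_t)$ by $\estbestfr-\liplink\funcdiff-2\threscomp$, and then use the phase-1 exit condition to show $\estbestfr\geq\compprobfunc^*-2\threscomp$, giving $x_t\in\baset$. Your bookkeeping with the fixed constant $\estbestfr$ is in fact slightly cleaner than the paper's, which phrases the same bound via the point $x_t^{(r)}$ from the end of phase 1.
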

	\begin{proof}
		Suppose $\querytype_t=\text{label}$ for some $t$. Then we must have $\confr_t(x_t)\geq 0$, and that $\boundr_t\rsigma_{t-1}(x_t)<\threscomp, \boundr_t\rsigma_{t-1}(x_t^{(r)})<\threscomp$, here $x_t^{(r)}$ is the value of $x_t$ on line 6. Then we have
		\begin{align*}
		\compprobfunc(x_t)&\geq \rmu_{t-1}(x_t)-\boundr_{t}\rsigma_{t-1}(x_t)\\
		&=\confr_t(x_t)-2\boundr_{t}\rsigma_{t-1}(x_t)+\max_{x'} \left\{\rmu_{t-1}(x' )-\boundr_t\rsigma_{t-1}(x')\right\}-\liplink\funcdiff\\
		&\geq \confr_t(x_t)-2\boundr_{t}\rsigma_{t-1}(x_t)+\rmu_{t-1}\left(x_t^{(r)} \right)-\boundr_t\rsigma_{t-1}\left(x_t^{(r)}\right)-\liplink\funcdiff\\
		&\geq 0-2\threscomp + f_r^*-2\threscomp-\liplink\funcdiff\\
		&=f_r^*-4\threscomp-\liplink\funcdiff.
		\end{align*}
		The first inequality is by applying \ref{lem:confidence_imp2}; the second inequality is by letting $x'=x_t^{(r)}$; the third inequality is by noticing that 
		\[\rmu_{t-1}\left(x_t^{(r)} \right)-\boundr_t\rsigma_{t-1}\left(x_t^{(r)}\right)\geq \left[\rmu_{t-1}\left(x_t^{(r)} \right)+\boundr_t\rsigma_{t-1}\left(x_t^{(r)}\right)\right]-2\boundr_t\rsigma_{t-1}\left(x_t^{(r)}\right)\geq f_r^*-2\threscomp.\]
	\end{proof}
	
	Lemma \ref{lem:bound_comp} shows that we will not make too many queries on comparisons, whereas Lemma \ref{lem:bound_outbase} shows that we always query $x_t\in \baset$ when $\querytype_t=\text{label}$.
	Now let $\startupcomp$ be the smallest number such that for any $\budget\geq \startupcomp\costcomp$ we have
	\[\costdirect\left(\frac{2r\sqrt{d}}{\varepsilon_{\upperquery}}\right)^d \left(\frac{2\labelerrbound\boundr_{\upperquery}}{\threscomp}\right)^2=C_{\kernel}\left(\frac{\boundr_{\upperquery}}{\threscomp}\right)^{p+2} \leq \frac{\Lambda}{2}\]
	where $C_\kernel$ and $p$ are defined in (\ref{eqn:cover_number}). Such $\Lambda_0$ is guaranteed to exist, since $\boundr_{\upperquery}$ grows in sublinear rate for linear, SE and Mat\'{e}rn kernels on $\upperquery$, and therefore $\Lambda$. 
	Thus the number of queries to $\compfunc$, $T_n^c(\featurespace)$, is at most $\lowerquery/2$, and therefore we query at least $\lowerquery/2$ times on direct queries, $\nqueryl_N(\baset)\geq \lowerquery/2$. 
	
	We now follow a similar path as for comparison queries to bound the regret based on direct queries.
	Note that if we use direct query at round $t$, we have $x_t\in \baset$ and that
	\begin{align}
	f^*-f(x_t)&\leq \labmu_{t-1}(x^*)+\boundlab_t\labsig_{t-1}(x^*)-(\labmu_{t-1}(x_t)-\boundlab_t\labsig_{t-1}(x_t))\nonumber\\
	&\leq \labmu_{t-1}(x_t)+\boundlab_t\labsig_{t-1}(x_t)-(\labmu_{t-1}(x_t)-\boundlab_t\labsig_{t-1}(x_t))\nonumber\\
	&= 2\boundlab_t\labsig_{t-1}(x_t). \label{eqn:reg_sigma}
	\end{align}
	The first inequality uses Lemma \ref{lem:confidence_imp2}, and the second inequality uses Claim \ref{claim:xstar_in} and that $x_t$ is the maximizer of $\labmu_{t-1}(x)+\boundlab_t\labsig_{t-1}(x)$.
	
	Now therefore
	\begin{align*}
	\left(\sum_{t:\querytype_t=\labeltype} \left(f^*-f(x_t)\right)\right)^2&\leq T_n^t\left(\baset\right)\sum_{t:\querytype_t=\labeltype} \left(f^*-f(x_t)\right)^2\\
	&\leq T_n^t\left(\baset\right)\sum_{t:\querytype_t=\labeltype} 4\left(\boundlab_t\labsig_{t-1}(x_t)\right)^2\\
	&\leq C_1 T_n^t\left(\baset\right)(\boundlab_n)^2 \infogain{T_n^t\left(\baset\right)}{\baset}
	\end{align*}
	Here $C_1=\frac{8}{\log(1+\labelerrbound^{-2})}$.
	The first step is from the Cauchy-Schwarz inequality, and that $\nqueryl_n(\baser)=0$; the second step is from (\ref{eqn:reg_sigma}); the last step uses Lemma \ref{lem:bound_sigmasum}. 

	So
	\begin{align*}
	\simpleregret(\Lambda)&\leq \frac{1}{T_N^t(\baset)}\sum_{t:x_t\in \baset, \querytype_t=\labeltype} \left(f^*-f(x_t)\right)\\
	&\leq \frac{\boundlab_N}{T_N^t(\baset)}\sqrt{C_1 T_N^t\left(\baset\right) \infogain{T_N^t\left(\baset\right)}{\baset}}\\
	&\leq \frac{C\boundlab_{\lowerquery}}{\lowerquery}\sqrt{ \lowerquery \infogain{\lowerquery\left(\baset\right)}{\baset}}\\
	&\leq C\left(B+\sqrt{\left(\infogain{\lowerquery}{\baset}+\log(1/\delta)\right)}\right)\sqrt{\frac{\infogain{\lowerquery}{\baset}}{\lowerquery}}.
	\end{align*}
	
	\subsection{Proof of Lemma \ref{lem:bound_comp} \label{sec:proof_lem_bound_comp}}
	We use the following lemma from \citep{kandasamy2016multi}\footnote{The original lemma from \citep{kandasamy2016multi} assumes $\truthfunc \sim GP(0,\kernel)$ and $\varepsilon\sim \mathcal{N}(0,\labelerrbound^2)$, but exactly the same proof applies without these assumptions. \label{footnote:sameproof}}:
	
	\begin{lemma}[Lemma 13, \citep{kandasamy2016multi}] \label{lem:boundvar}
		Let
		$A \subseteq \featurespace$ such that its L2 diameter diam$(A)\leq  D$. Say we have n queries $(\feature_t)_{t=1}^n$ of which $s$ points are in $A$. Then the posterior variance of the GP, $\kernel'(\feature, \feature)$ at any $\feature \in A$ satisfies
		\[\kernel'(\feature, \feature)\leq \begin{cases}
		C_{SE}D^2+\frac{\labelerrbound^2}{s}, & \text{ if }\kernel \text{ is the SE kernel},\\
		C_{Mat}D+\frac{\labelerrbound^2}{s}, & \text{ if }\kernel \text{ is the Mat\'{e}rn kernel},\\    
		\end{cases} \]
		for appropriate kernel dependent constants $C_{SE}, C_{Mat}$.
	\end{lemma}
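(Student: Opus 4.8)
The plan is to prove Lemma \ref{lem:boundvar} via the weight-space (variational) characterization of the GP posterior variance, which converts an upper bound on $\kernel'(\feature,\feature)$ into the easier task of exhibiting a single good set of coefficients. Let $\varphi(\feature)=\kernel(\feature,\cdot)$ denote the canonical feature map into the RKHS $\mathcal{H}_\kernel$, so that $\langle \varphi(\feature),\varphi(\feature')\rangle_{\mathcal{H}_\kernel}=\kernel(\feature,\feature')$. Starting from the posterior-variance formula in (\ref{eqn:bayes_update}) and completing the square (equivalently, solving the normal equations $\alpha^\star=(K+\labelerrbound^2 I)^{-1}k$), one obtains
\begin{equation*}
\kernel'(\feature,\feature)=\min_{\alpha\in\mathbb{R}^n}\Big\{\big\|\varphi(\feature)-\textstyle\sum_{i=1}^n\alpha_i\varphi(\feature_i)\big\|_{\mathcal{H}_\kernel}^2+\labelerrbound^2\|\alpha\|_2^2\Big\}.
\end{equation*}
Since the right-hand side is a minimum over $\alpha$, \emph{any} fixed choice of coefficients yields a valid upper bound, which is all we need.

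First I would plug in the uniform choice that discards every query outside $A$: set $\alpha_i=1/s$ for each of the $s$ indices with $\feature_i\in A$, and $\alpha_i=0$ otherwise. The regularization term then evaluates exactly to $\labelerrbound^2\cdot s\cdot(1/s)^2=\labelerrbound^2/s$, which already produces the additive $\labelerrbound^2/s$ term in the statement. It remains to control the ``fit'' term $\big\|\varphi(\feature)-\frac1s\sum_{\feature_i\in A}\varphi(\feature_i)\big\|_{\mathcal{H}_\kernel}^2$, which measures how well the feature vector at $\feature$ is approximated by the average of the in-$A$ feature vectors.

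For the fit term I would use convexity of $\|\cdot\|^2$ (Jensen) to pull the average outside the norm, giving
\begin{equation*}
\Big\|\varphi(\feature)-\tfrac1s\textstyle\sum_{\feature_i\in A}\varphi(\feature_i)\Big\|_{\mathcal{H}_\kernel}^2\le \tfrac1s\textstyle\sum_{\feature_i\in A}\big\|\varphi(\feature)-\varphi(\feature_i)\big\|_{\mathcal{H}_\kernel}^2,
\end{equation*}
and then expand each summand as $\|\varphi(\feature)-\varphi(\feature_i)\|_{\mathcal{H}_\kernel}^2=2\big(\kernel(\feature,\feature)-\kernel(\feature,\feature_i)\big)$, using that both kernels are stationary with constant diagonal $\kernel(\feature,\feature)=\kappa(0)$. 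Because $\feature,\feature_i\in A$ and $\mathrm{diam}(A)\le D$, we have $\|\feature-\feature_i\|\le D$, so each summand is at most $2\big(\kappa(0)-\kappa(r)\big)$ with $r\le D$. The final step is the kernel-specific smoothness estimate: for the SE kernel, $\kappa(0)-\kappa(r)=\kappa(0)(1-e^{-r^2/(2\ell^2)})\le \kappa(0)r^2/(2\ell^2)$ gives the quadratic $C_{SE}D^2$ bound, while for the Matérn kernel the weaker local regularity yields only $\kappa(0)-\kappa(r)\le C r$ (exactly $\kappa(0)(1-e^{-r/\rho})$ in the Matérn-$1/2$ case), producing the linear $C_{Mat}D$ bound.

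The main obstacle is this last, kernel-dependent estimate: the SE bound is immediate from convexity of the exponential, but the Matérn case requires invoking the explicit covariance form (through the modified Bessel function $B_\nu$) to certify the linear-in-$D$ behavior near the origin, and the resulting constant $C_{Mat}$ depends on the smoothness $\nu$ and lengthscale $\rho$. Everything else — the variational identity, the uniform weight choice that exactly yields $\labelerrbound^2/s$, and the Jensen step — is kernel-agnostic and routine.
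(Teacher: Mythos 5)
Your proof is correct, and it is worth noting that the paper never actually proves this statement: it imports it verbatim as Lemma~13 of \citep{kandasamy2016multi}, adding only a footnote that the original argument (written for $f\sim\mathcal{GP}(0,\kappa)$ with Gaussian noise) ``applies without these assumptions.'' The cited proof bounds the posterior variance by the Bayesian mean-squared error of the sample-average predictor built from the $s$ observations inside $A$; your weight-space route --- writing $\kappa'(x,x)=\min_{\alpha}\{\|\varphi(x)-\sum_i\alpha_i\varphi(x_i)\|_{\mathcal{H}_\kappa}^2+\eta^2\|\alpha\|_2^2\}$ and plugging in uniform weights $1/s$ on the in-$A$ points and $0$ elsewhere --- is the algebraic mirror image of that argument: the same choice of coefficients, the same Jensen step, the same expansion $\|\varphi(x)-\varphi(x_i)\|^2=2(\kappa(0)-\kappa(\|x-x_i\|))$, and the same kernel-specific smoothness estimates. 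What your formulation buys is exactly the content of the paper's footnote: the variational identity is a purely algebraic consequence of the update formula in \eqref{eqn:bayes_update} (verified by solving the normal equations, as you do), so no distributional assumption on $f$ or on the noise enters anywhere, and the lemma manifestly holds in the agnostic RKHS setting the paper works in. Zeroing the coefficients outside $A$ also neatly absorbs the ``conditioning on more observations cannot increase posterior variance'' monotonicity step that the Bayesian phrasing otherwise requires.

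One caveat you should state explicitly: the Mat\'{e}rn estimate $\kappa(0)-\kappa(r)\leq Cr$ holds on a bounded domain only for smoothness $\nu\geq 1/2$, since $\kappa(0)-\kappa(r)\asymp r^{2\nu}$ as $r\to 0$ when $\nu<1$, so the linear bound fails near the origin for $\nu<1/2$. This restriction is inherited from the original lemma (its constant $C_{Mat}$ is likewise only ``appropriate'' in that regime) and is harmless for the paper, whose Mat\'{e}rn information-gain bounds already require larger $\nu$, but a complete write-up should record it.
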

	\newcommand{\fset}{\mathcal{F}}
	
	Consider the SE kernel and the comparison oracle, and a $\varepsilon_n=\frac{\threscomp}{\boundr_n\sqrt{8C_{SE}}}$ covering $(B_i)_{i=1}^n$ of $\featurespace$. We claim that the number of comparison queries inside any $B_i$ is at most $\lceil 2(\frac{\labelerrbound\boundr_n}{\threscomp})^2 \rceil$: suppose we have already queried $\lceil 2(\frac{\labelerrbound\boundr_n}{\threscomp})^2 \rceil$ samples in $B_i$ at some time $t<n$. By Lemma \ref{lem:boundvar} we have
	\[\max_{x\in B_i} \kappa^{(r)}_{t-1}(x,x)\leq C_{SE}(2\varepsilon_n)^2+\frac{\labelerrbound^2}{2(\frac{\labelerrbound\boundr_n}{\threscomp})^2}\leq \left(\frac{\threscomp}{\boundr_n}\right)^2. \]
	Therefore $\boundr_n\rsigma_{t-1}(x)\leq \boundr_{t}\rsigma_{t-1}(x)\leq \threscomp$. Note that whenever $\querytype_t=\text{comp}$, we always have $\boundr_{t}\rsigma_{t-1}(x_t)\leq \threscomp$; so the event that $\querytype_t=\text{comp}$ and $x_t\in B_i$ will not happen until time $n$. We can obtain a similar result for Mat\'{e}rn kernel with $\varepsilon_n=\frac{\threscomp^2}{4C_{Mat}(\boundr_n)^2}$. Therefore we have
	\begin{align}
	\nqueryr_n(\featurespace)\leq \Omega_{\varepsilon_n}(\featurespace)\lceil 2(\frac{\labelerrbound\boundr_n}{\threscomp})^2 \rceil\leq \left(\frac{2r\sqrt{d}}{\varepsilon_n}\right)^d \left(\frac{2\labelerrbound\boundr_n}{\threscomp}\right)^2=C_{\kernel}\left(\frac{\boundr_n}{\threscomp}\right)^{p+2}. \label{eqn:cover_number}
	\end{align}
	Here $\Omega_{\varepsilon_n}(\featurespace)$ is the covering number of $\featurespace$, and we bound the covering number as $\Omega_{\varepsilon_n}(\featurespace)\leq \left(\frac{2r\sqrt{d}}{\varepsilon_n}\right)^d$. Here $C_\kernel=2^{2.5d+2}r^dd^{d/2}C_{SE}^{d/2}\labelerrbound^2$ and $p=d$ for SE kernel, while $C_{\kernel}=2^{3d+2}r^dd^{d/2}C_{Mat}^d\labelerrbound^2$ and $p=2d$ for Mat\'{e}rn kernel.
	
	\subsection{Proof of Corollary \ref{col:unknown_diff}}
	\begin{proof}
		
		Firstly, for the first regret bound, we have the same guarantee as in Theorem \ref{thm:twophase} since the first phase is exactly the same. For the second bound, when $\budget\geq \budget_0$, we allocate at least budget of $\budget/2$ on direct queries. Since we double $\funcdiff_\iterdiff$ in each iteration, at some iteration $\iterdiff_0=O(\log(\equivdiff/\funcdiff_0))$ we will have $\funcdiff_{\iterdiff_0}\in [\funcdiff,\equivdiff]$. Let $\changepoint= \frac{\lowerquery}{2\log(\diffupper/\funcdiff_0)}$. From the proof of Theorem \ref{thm:twophase}, we have the regret $\simpleregret(\Lambda)\leq \boundlab_{\changepoint}\sqrt{\frac{8\infogain{\changepoint}{\basetp}}{\log (1+\eta^{-2})\changepoint}}$ in iteration $\iterdiff_0$. The theorem then follows by realizing $\boundlab_n, \infogain{n}{\basetp}$ grows sublinearly in $n$.
		
	\end{proof}

\end{document}